\definecolor{Gray}{gray}{0.9}
\definecolor{cvprblue}{rgb}{0.21,0.49,0.74}
\crefname{section}{Sec.}{Secs.}
\Crefname{section}{Section}{Sections}
\Crefname{table}{Table}{Tables}
\crefname{table}{Tab.}{Tabs.}
\pgfplotsset{compat=1.18} 
\definecolor{Gray}{gray}{0.9}
\newcounter {chapter}
\newcounter{theorem}
\renewcommand\thechapter{\@arabic\c@chapter}
\renewcommand\thetheorem{\thesection.\arabic{theorem}}
\newenvironment{theorem}[1][\theoremname]%
    {\refstepcounter{theorem}\par\medskip\noindent%
    \textbf{#1~\thetheorem}\:\itshape}{\medskip}
\newenvironment{proof}[1][\proofname]%
    {\par\medskip\noindent%
    \textit{#1}\:\ignorespaces}{\hfill$\square$\medskip}
\newtheorem{lemma}[theorem]{Lemma}
\def\theopargself{
    \def\@spopargbegintheorem##1##2##3##4##5{
        \trivlist\item[\hskip \labelsep {##4 ##1\ ##2}]
        {\ignorespaces ##4 ##3 \@thmcounterend\ }##5 \ignorespaces}
    
    \def\@Opargbegintheorem##1##2##3##4{
        \trivlist \item[\hskip \labelsep {##3 ##1}]
        {\ignorespaces ##3(##2) \@thmcounterend\ }##4 \ignorespaces}
}
\def\@spthm#1#2#3#4{\topsep 7\p@ \@plus2\p@ \@minus4\p@
\refstepcounter{#1}%
\@ifnextchar[{\@spythm{#1}{#2}{#3}{#4}}{\@spxthm{#1}{#2}{#3}{#4}}}
\def\@spxthm#1#2#3#4{\@spbegintheorem{#2}{\csname the#1\endcsname}{#3}{#4}%
                    \ignorespaces}
\def\@spythm#1#2#3#4[#5]{\@spopargbegintheorem{#2}{\csname
       the#1\endcsname}{#5}{#3}{#4}\ignorespaces}
\def\@spbegintheorem#1#2#3#4{\trivlist
                 \item[\hskip\labelsep{#3#1\ #2\@thmcounterend}]#4}
\def\@spopargbegintheorem#1#2#3#4#5{\trivlist
      \item[\hskip\labelsep{#4#1\ #2}]{#4(#3)\@thmcounterend\ }#5}
\newcommand\theoremname{Theorem}
\newcommand\proofname{Proof}
\title{A Tale of Two Classes: \\ Adapting Supervised Contrastive Learning to Binary Imbalanced Datasets}
\author{
  David Mildenberger\textsuperscript{1,2,*}, Paul Hager\textsuperscript{1,*},
  Daniel Rueckert\textsuperscript{1,2,3}, Martin J. Menten\textsuperscript{1,2,3}\\[0.5em]
  \normalsize \textsuperscript{1}Technical University of Munich,
  \textsuperscript{2}Munich Center for Machine Learning, \textsuperscript{3}Imperial College London\\[0.5em]
  \small \texttt{\{david.mildenberger, paul.hager, daniel.rueckert, martin.menten\}@tum.de}\\[0.5em]
  \small \textsuperscript{*}These authors contributed equally.
}
\begin{document}
\maketitle
\begin{abstract}
Supervised contrastive learning (SupCon) has proven to be a powerful alternative to the standard cross-entropy loss for classification of multi-class balanced datasets. However, it struggles to learn well-conditioned representations of datasets with long-tailed class distributions. This problem is potentially exacerbated for binary imbalanced distributions, which are commonly encountered during many real-world problems such as medical diagnosis. In experiments on seven binary datasets of natural and medical images, we show that the performance of SupCon decreases with increasing class imbalance. To substantiate these findings, we introduce two novel metrics that evaluate the quality of the learned representation space. By measuring the class distribution in local neighborhoods, we are able to uncover structural deficiencies of the representation space that classical metrics cannot detect. Informed by these insights, we propose two new supervised contrastive learning strategies tailored to binary imbalanced datasets that improve the structure of the representation space and increase downstream classification accuracy over standard SupCon by up to 35\%. We make our code available.\footnote{\url{https://github.com/aiforvision/TTC}}
\end{abstract}
    
\section{Introduction}

\begin{figure}[tb]
  \centering
  \includegraphics[width=1.0\linewidth]{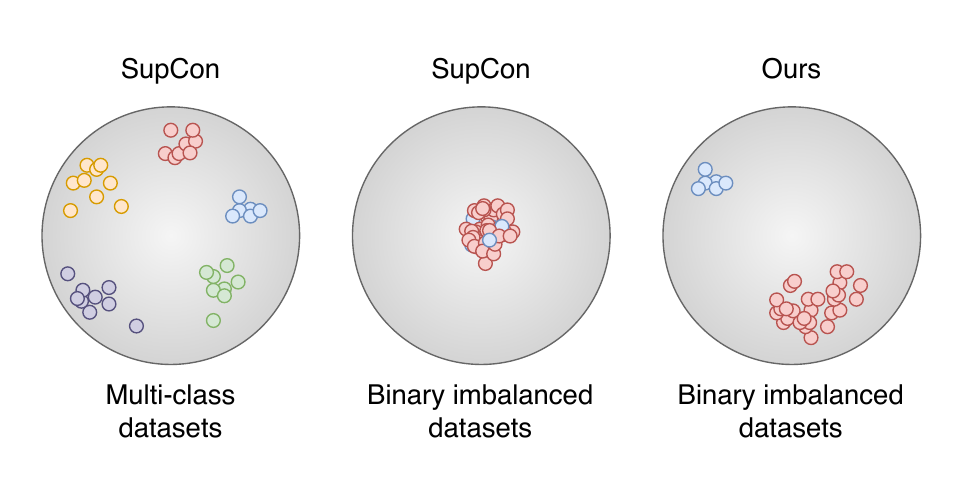}
  \caption{Supervised contrastive learning (SupCon) on multi-class balanced datasets returns a well-conditioned representation space, in which semantic classes are clearly separated. We show that for binary imbalanced datasets the prevalence of a dominant majority class causes the embeddings to collapse to a single point. Our proposed fixes restore the clear separation of semantic classes.}
  \label{fig:graphical_abstract}
\end{figure}


Supervised contrastive learning (SupCon) has emerged as a powerful alternative to the cross-entropy loss for supervised deep learning~\cite{DBLP:journals/corr/abs-2004-11362,graf2023dissecting,li2022targeted,li2022selective}. SupCon combines full label information with a contrastive loss to cluster samples of the same class in similar regions of the representation space. Conversely, embeddings of different classes are pushed apart. This results in a well-conditioned representation space that preserves discriminative features of each sample while separating semantic classes~\cite{assran2022hiddenuniformclusterprior}. 
SupCon has been used to achieve state-of-the-art results across diverse fields, including genetics \cite{camargo2024identification}, out-of-distribution detection \cite{sun2022out}, object detection \cite{sun2021fsce}, video action recognition \cite{han2020self}, and neuroscience \cite{schneider2023learnable}.

SupCon has been predominantly developed on and applied to multi-class balanced benchmark datasets like ImageNet, which consist of numerous equally prevalent classes. In contrast, real-world datasets often significantly deviate from these idealized conditions.
There has been a growing focus on adapting supervised contrastive learning to long-tailed datasets, which are characterized by a few common ``head'' classes and many rare ``tail'' classes~\cite{DBLP:journals/corr/abs-2107-12028, wang2021contrastive,li2022targeted,zhu2022balanced, hou2023subclass}. 
These works show that SupCon often yields representation spaces with dominating head classes when applied to long-tailed datasets, leading to reduced downstream utility. 

SupCon's shortcomings on long-tailed datasets are potentially exacerbated on distributions with only two underlying classes: a common majority class and a rare minority class. Such binary imbalanced distributions are common in real-world tasks, such as anomaly detection, fraud detection, and disease classification. For example, during medical screening, subjects are classified as either healthy or diseased, with the healthy cases typically outnumbering the pathological ones~\cite{healthcare10071293,Hee_2022}. A survey of representation learning for medical image classification found that 78 out of 114 studies focused on binary classification problems~\cite{Hee_2022}.

This work is the first to investigate the effectiveness of SupCon on binary imbalanced datasets, identifying limitations of existing supervised contrastive learning strategies and proposing algorithmic solutions to these issues. Our main contributions are:

\begin{itemize} 
\item We empirically demonstrate that SupCon is ineffective on binary imbalanced datasets. In controlled experiments on seven natural and medical imaging datasets, we observe that the performance of SupCon degrades with increasing class imbalance, falling behind the standard cross-entropy loss even at moderate levels of class imbalance.
\item To investigate these findings, we introduce two novel metrics for diagnosing the structural deficiencies of the representation space. We show that at high class imbalance all embeddings are closely clustered in a small region of the representation space, preventing separation of semantic classes (see \cref{fig:graphical_abstract}). While canonical metrics fail to capture this problem, our proposed metrics detect this representation space collapse and diminished downstream utility. Furthermore, we theoretically substantiate our empirical observations in a proof.
\item Informed by the insights gained through our metrics, we propose two new supervised contrastive learning strategies tailored to binary imbalanced datasets. These adjustments are easy to implement and incur minimal additional computational cost compared to the standard SupCon loss. We demonstrate that our fixes boost downstream classification performance by up to 35\% over SupCon and outperform leading strategies for long-tailed data by up to 5\%.
\end{itemize}

\section{Related works}

\subsection{Analyzing representation spaces}

To evaluate the quality of representation spaces, Wang and Isola have introduced the notions of representation space alignment and uniformity \cite{DBLP:journals/corr/abs-2005-10242}. Alignment measures how closely semantically similar samples are located in the representation space. Uniformity quantifies the utilization of the representation space's capacity. Both metrics have been empirically validated as strong indicators of the representation space's quality and downstream utility. However, high uniformity and alignment alone do not guarantee separability of classes, as shown by Wang \etal \cite{wang2022chaos}. In a subsequent study, Li \etal proposed analyzing alignment and uniformity at the level of semantic classes instead of individual samples \cite{li2022targeted}. While this improves upon sample-wise analysis, it still fails to properly compare representations \textit{between} classes, which is crucial for downstream classification performance. We address this limitation by introducing two novel metrics, enabling the evaluation of sample and class consistency within representation neighborhoods.

\subsection{Supervised contrastive learning for long-tailed datasets}
\label{sec:related_works:longtail}

When applying SupCon to long-tailed datasets, samples from the majority class often occupy a disproportionate amount of the representation space \cite{zhu2022balanced, li2022targeted, DBLP:journals/corr/abs-2107-12028,kang2021exploring, hou2023subclass}. In the most severe cases the representation space collapses completely, losing all utility \cite{pmlr-v202-xue23d,graf2023dissecting,Hager_2023_CVPR,chen2022perfectly}. Many works thus aim to enhance latent space uniformity by spreading features evenly, irrespective of data imbalance. Zhu \etal balance gradient contributions of classes to achieve a regular simplex latent structure \cite{zhu2022balanced}. Hou \etal split the majority classes into smaller sub-classes according to their latent features \cite{hou2023subclass}. Cui \etal leverage parametric class prototypes to adaptively balance the learning signal \cite{DBLP:journals/corr/abs-2107-12028}. Kang \etal and Li \etal limit the number of positives that contribute to the loss with Li \etal also using fixed class prototypes \cite{kang2021exploring, li2022targeted}. Although these methods outperform SupCon on long-tailed distributions, they remain untested on binary imbalanced distributions whose unique characteristics are not explicitly addressed.

\section{Metrics to diagnose representation spaces of binary data distributions}
\label{sec:unif_align}

\begin{figure*}[tb]
\centering
    \includegraphics[width=1\linewidth]{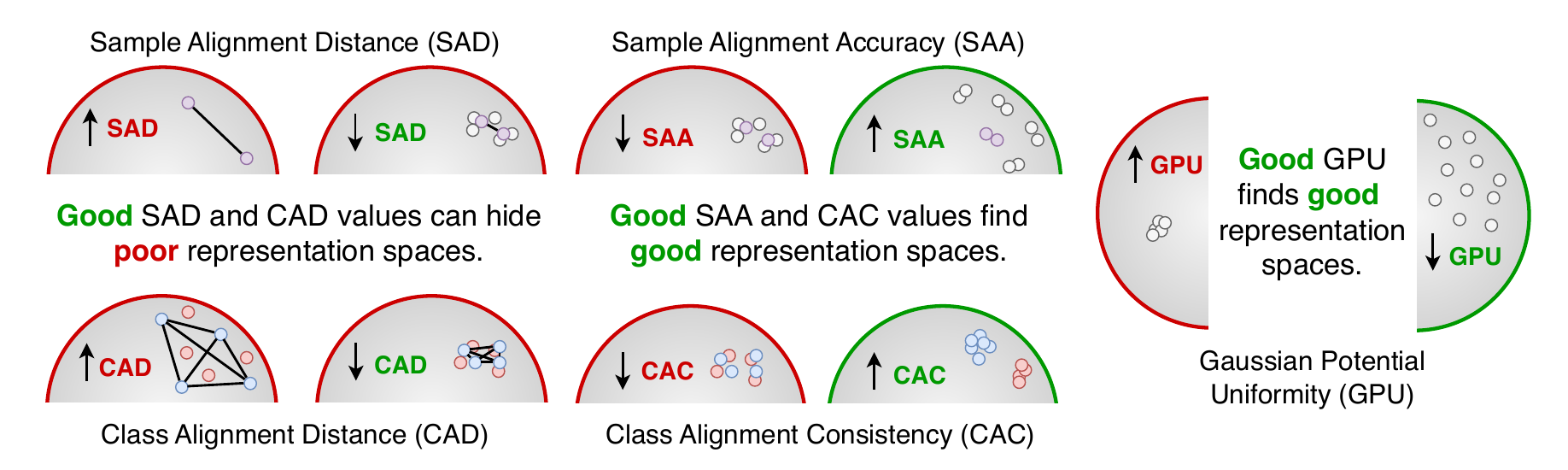}
    \caption{Our novel sample alignment accuracy (SAA) and class alignment consistency (CAC) metrics capture the relationships \textit{between} embeddings of different classes instead of just within one class. By more directly measuring the separability of latent classes, it is a stronger indicator of a representation space's downstream utility.}
    \label{fig:metrics}
\end{figure*}

Alignment and uniformity are established metrics for evaluating the quality and structure of representation spaces obtained via unsupervised contrastive learning \cite{DBLP:journals/corr/abs-2005-10242}. Alignment measures the average distance between positive pairs in the feature space. Low average distance, or high alignment, indicates consistent embeddings that are robust to noise. Uniformity measures the average Gaussian potential between embeddings on the unit hypersphere. Low average potential, or high uniformity, corresponds to expressive embeddings that fully utilize the entire representation space. Effective representation spaces exhibit both high alignment and high uniformity and should in theory yield good linear separability of the semantic classes.

However, because these metrics were originally developed for unsupervised contrastive learning, they operate on a per-sample level and ignore latent class information. To address this limitation, Li \etal have extended alignment and uniformity to SupCon, by analyzing the alignment of classes instead of samples \cite{li2022targeted}. Although this class-level alignment better reflects semantic separability, it still does not capture the relationships \emph{between} classes.

We therefore propose two new alignment metrics: sample alignment accuracy (SAA) and class alignment consistency (CAC). 
Our metrics compare the alignment both within and between samples and classes.
This is in contrast to the canonical sample alignment distance (SAD)~\cite{DBLP:journals/corr/abs-2005-10242} and class alignment distance (CAD)~\cite{li2022targeted} which only measure the alignment within one sample or class, as shown in \cref{fig:metrics}.

\subsection{Definitions}
\label{sec:definitions}

We define a dataset \(\mathcal{X}\) containing \(N\) images, \(\mathcal{X} = \{x_k\}_{k=1}^{N}\). Given that the images in \(\mathcal{X}\) are labeled with a binary class distribution, we denote the samples of class \(i \in \{0,1\}\) as \(x \in \mathcal{X}_i\) with the function $g: \mathcal{X} \rightarrow \{0,1\}$ mapping each image to its label. Let \(\Tilde{x}_k\) and \(\Tilde{x}_k^+\) denote two randomly augmented instances of \(x_k\) that form a positive pair. Conversely, any view that is not generated from \(x_k\), denoted by \(\Tilde{x}_k^-\) denotes, forms a negative pair with \(\Tilde{x}_k\). The set of all views is called $\mathcal{W}$ with $|\mathcal{W}| = 2N$. The set of all views of class $i$ is $\mathcal{W}_i$. The function \(f: \mathcal{W} \rightarrow \mathcal{S}^{d-1}\) maps a view \(\Tilde{x}\) onto a \(d\)-dimensional representation on the unit sphere \(\mathcal{S}^{d-1}\). 
%

\subsubsection{Sample alignment distance (SAD)} \label{def:sad} 
Sample alignment distance (SAD), as defined by Wang \etal~\cite{DBLP:journals/corr/abs-2005-10242}, measures the average distance between representations of two augmented views of the same sample. Formally, SAD is computed as the average pairwise \(\ell_2\) distance between sample-wise positive pairs:
\begin{equation}
   \text{SAD} = \frac{1}{|\mathcal{X}|} \sum_{x\in \mathcal{X}} \|(f(\Tilde{x}) - f(\Tilde{x}^+))\|_2
\end{equation}

A low SAD, or high alignment, implies that two different views of the same image produce similar embeddings. Obtaining consistent embeddings despite perturbations from augmentations typically indicates that generalized class-level semantic features are being captured, which is ultimately beneficial for downstream applications.


\subsubsection{Sample alignment accuracy (SAA)} We introduce the concept of sample alignment accuracy (SAA) to determine if the embeddings of positive pairs are more closely aligned with each other compared to other samples. 
$\text{SAA}$ is the proportion of all sample-wise positive pairs for which the \(\ell_2\) distance between their embeddings is smaller than that to all negative pairs:
\begin{multline}
\text{SAA} = \frac{1}{|\mathcal{X}|} \sum_{x \in \mathcal{X}} \mathbb{1}\Bigg(
    \|(f(\Tilde{x}) - f(\Tilde{x}^+))\|_2 \\
    < \min_{\Tilde{x}^{-} \in \mathcal{W} \setminus \{\Tilde{x},\Tilde{x}^+\}} 
    \|(f(\Tilde{x}) - f(\Tilde{x}^-))\|_2
\Bigg)
\end{multline}

\noindent Here, $\mathbb{1}(\cdot)$ is the indicator function, which outputs 1 if the condition $\cdot$ holds, and 0 otherwise.

Compared to SAD, SAA is more insightful in cases in which many samples, both positives and negatives, are placed in close proximity to each other. While SAD would indicate high alignment despite low separability of semantic classes, SAA would correctly diagnose a partially degenerate representation space.

\subsubsection{Class alignment distance (CAD)} Class alignment distance (CAD), introduced by Li \etal~\cite{li2022targeted}, calculates the average distance between all representations within a class to evaluate how well the learned representation space clusters samples according to their semantic labels \cite{li2022targeted}. Let $C$ be the number of classes and \(\Tilde{x}, \Tilde{x}' \in \mathcal{W}_i\) all unique pairs of samples in \(\mathcal{W}_i\):

\begin{equation}
   \text{CAD} = \frac{1}{C} \sum^C_{i=1} \frac{1}{\binom{|\mathcal{X}_i|+1}{2}} \sum_{\Tilde{x}, \Tilde{x}' \in \mathcal{W}_i}\|(f(\Tilde{x}) - f(\Tilde{x}'))\|_2
\end{equation}

Compared to SAD, CAD captures alignment across an entire class, indicating how well a class clusters on the representation space's hypersphere.
    
\subsubsection{Class alignment consistency (CAC)} 
To measure how pure embedding neighborhoods are with respect to the latent class, we introduce class alignment consistency (CAC). We define class alignment within a local neighborhood of the closest $r$ views to $\Tilde{x}$, which we call $\mathcal{W}_{\Tilde{x}}$. For our analysis we set $r$ to 5\% of all views. Let $\mathcal{D}$ denote the set of sets containing each $\Tilde{x}$ and its local neighborhood $\mathcal{W}_{\Tilde{x}}$, with $(\Tilde{x}, \mathcal{W}_{\Tilde{x}}) \in \mathcal{D}$. 

\begin{equation}
   \text{CAC} = \frac{1}{|\mathcal{D}|} \sum_{(\Tilde{x}, \mathcal{W}_{\Tilde{x}}) \in \mathcal{D}} \frac{1}{|\mathcal{W}_{\Tilde{x}}|} \sum_{\Tilde{x}' \in  \mathcal{W}{_{\Tilde{x}}}} \mathbb{1}(g(\Tilde{x}) = g(\Tilde{x}'))
\end{equation}
    

Unlike CAD, CAC also measures the distance of embeddings to those of the opposite class.
This provides a more direct signal of the separability of classes that better correlates with downstream classification performance.

\subsubsection{Gaussian potential uniformity (GPU)} Uniformity measures how evenly  representations are distributed across the unit hypersphere. Wang \etal \cite{DBLP:journals/corr/abs-2005-10242} define uniformity through the logarithm of the average pairwise Gaussian potential:
\begin{equation}
\text{GPU} = \log\left(\frac{1}{\binom{|N|+1}{2}} \sum_{k=1}^{N} \sum_{j=1}^{N} e^{-\|f(\Tilde{x}_k) - f(\Tilde{x}_j)\|^2_2}\right)
\end{equation}

A lower GPU indicates that the embeddings are more evenly spread across the hypersphere. Utilizing large portions of the hypersphere for embeddings suggests a broader range of features learned and thus increased generalizability to unseen data.

\section{Methods}

\begin{figure}[tb]
\centering
    \includegraphics[width=\linewidth]{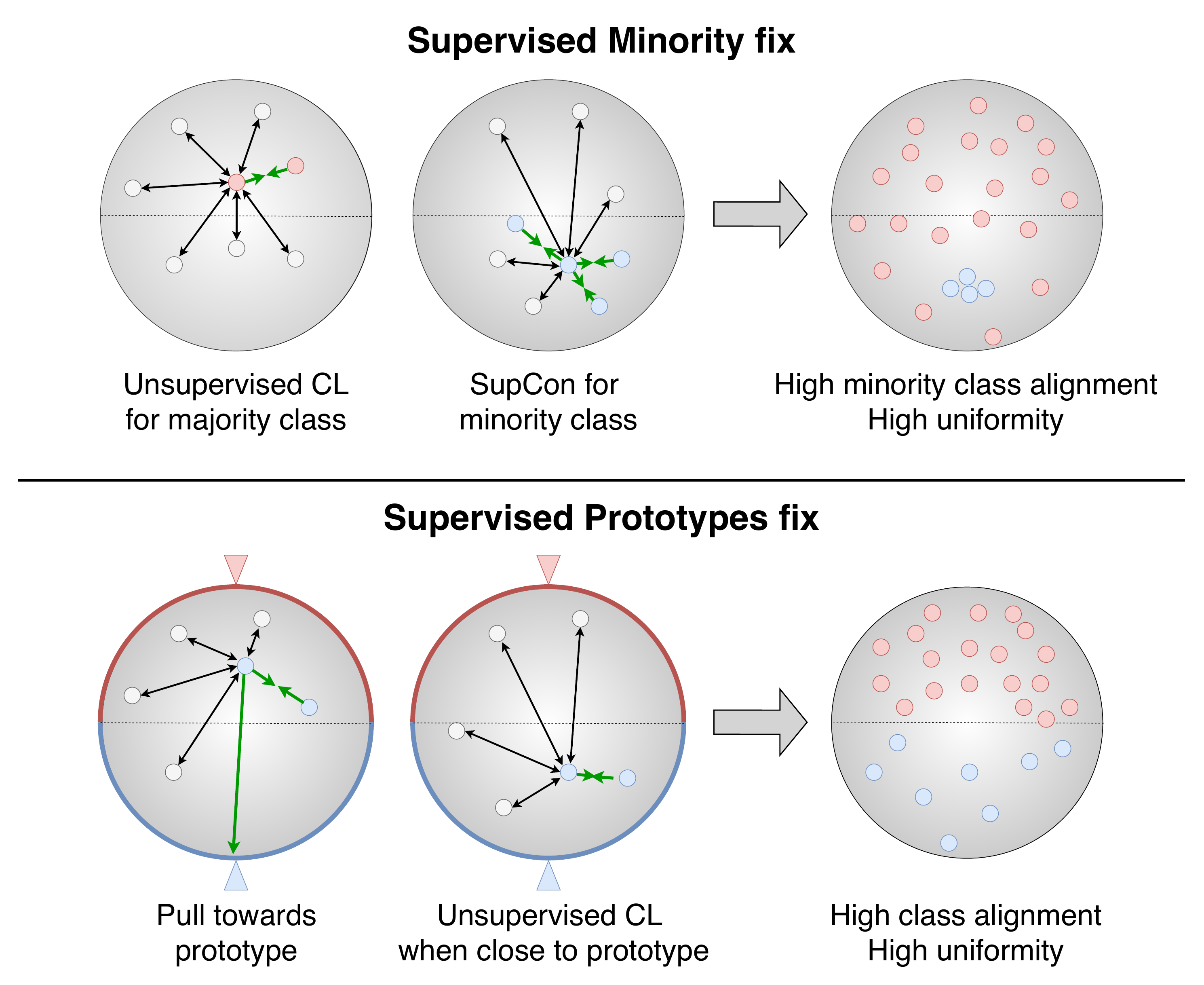}
    \caption{We introduce two fixes for supervised contrastive learning. Supervised Minority applies supervision exclusively to the minority class, preventing class collapse and enhancing alignment of the minority class. Supervised Prototypes attracts samples to fixed class prototypes, improving both class alignment and uniformity.}
    \label{fig:sup_fixes}
\end{figure}

\subsection{SupCon} \label{eq:sup_con_loss}

SupCon is best understood as an extension of the original unsupervised contrastive NT-Xent loss. In essence, the unsupervised loss maximizes the cosine similarity between embeddings of positive pairs while minimizing the similarity between embeddings of negative pairs. Using the notation introduced in \cref{sec:definitions}, the loss is:

\begin{equation}
    \mathcal{L}_{\text{NT-Xent}} = \quad -\sum_{x \in \mathcal{X}} \log \frac{e^{f(\tilde{x}) \cdot f(\tilde{x}^+)/\tau}}{\sum_{\tilde{x}^- \in \mathcal{W} \setminus \{\tilde{x}\}}e^{f(\tilde{x}) \cdot f(\tilde{x}^-)/\tau}}
\end{equation}

\noindent Here, \(\cdot\) denotes the dot product and \(\tau \in \mathbb{R}^+\) is the temperature parameter. In practice, $\mathcal{X}$ and $\mathcal{W}$ are restricted to the elements and views of a single batch.

SupCon extends the NT-Xent loss to include full label information. By considering all samples of the same class in the numerator, it aims to maximize the similarity between all projections of a class. For each class \(i\), the SupCon loss is defined as:

\begin{equation}
\mbox{\footnotesize$
\displaystyle
\mathcal{L}_{\text{SupCon}}^{i} \triangleq \\
-\sum_{x \in \mathcal{X}_i} \frac{1}{|\mathcal{W}_i \setminus \{\Tilde{x}\}|} \sum_{p \in \mathcal{W}_i \setminus \{\Tilde{x}\}} \log \frac{e^{f(\Tilde{x}) \cdot f(p) / \tau}}{\sum_{a \in \mathcal{X} \setminus \{\Tilde{x}\}} e^{f(\Tilde{x}) \cdot f(a) / \tau}}
$} 
\end{equation}
The total loss in the binary case is then:
\begin{equation}
\mathcal{L}_{\text{SupCon}} = \mathcal{L}_{\text{SupCon}}^{0} + \mathcal{L}_{\text{SupCon}}^{1}
\end{equation}

\subsection{Supervised Minority}
\label{app:def:minsup}

We introduce Supervised Minority, a novel supervised contrastive learning strategy specifically for binary imbalanced datasets. 
Supervised Minority applies supervision exclusively to the minority class (see \cref{fig:sup_fixes}). 
Formally, we combine SupCon in the minority (min) class with the NT-Xent \cite{DBLP:journals/corr/abs-2002-05709} loss in the majority (maj) class: 
\begin{equation}
\mathcal{L}_{\text{SupMin}} = \mathcal{L}_{\text{SupCon}}^{\text{min}} + \mathcal{L}_{\text{NT-Xent}}^{\text{maj}}
\end{equation}

By using the NT-Xent loss for the majority class, we aim to guard against class collapse and increase uniformity. Additionally, by using SupCon in the minority class we enhance its alignment. 


\subsection{Supervised Prototypes}
\label{app:def:supproto}

Our second approach, Supervised Prototypes, builds upon the concept of fixed prototypes \cite{li2022targeted,mettes2019hyperspherical,yang2022inducing,kasarla2022maximum}. We initialize two fixed class prototypes at opposite ends of the representation space's hypersphere. Each prototype attracts samples of its respective class (see \cref{fig:sup_fixes}). While prototypes improve class alignment, they can reduce latent space uniformity \cite{li2022targeted}. To mitigate this, we attract samples towards the prototype only if their cosine similarity with that prototype is less than 0.5. When a sample’s representation already has high similarity to its prototype, it is influenced only by the NT-Xent loss.

We place the majority class prototype $p_{maj}$ on the $\mathcal{S}^{d-1}$ unit sphere so that it minimizes the average distance to all encodings of unaugmented training samples. This position is determined through gradient descent. Let $p_{min} = -p_{maj}$ represent the minority class prototype, ensuring maximal separation on the hypersphere from $p_{maj}$.

The loss of a sample and its class prototype is given by:
\begin{equation}
\mathcal{L}_{p_{\Tilde{x}}}^{i} = -\log \frac{e^{f(\tilde{x}) \cdot p_i/\tau}}
{\sum_{\tilde{x}^- \in \mathcal{W} \setminus \{\tilde{x}\}}e^{f(\tilde{x}) \cdot f(\tilde{x}^-)/\tau}}
\end{equation}
The complete contrastive loss with prototype alignment for class $i$, $\mathcal{L}_{\text{SupConProto}}^{i}$, and overall binary supervised contrastive loss with prototype alignment, $\mathcal{L}_{\text{SupConProto}}$, are defined as:
\begin{equation} 
\mbox{\footnotesize$
\displaystyle
\mathcal{L}_{\text{SupConProto}}^{i} \triangleq \sum_{x \in \mathcal{X}_i} \begin{cases}
                                               \left[ \mathcal{L}_{\text{NT-Xent}}(\Tilde{x}) + \mathcal{L}_{p_{\Tilde{x}}}^{(i)} \right] & \text{if } f(\Tilde{x}) \cdot p_i \leq 0.5 \\
                                                \mathcal{L}_{\text{NT-Xent}}(\Tilde{x})& \text{otherwise} 
                                        \end{cases}
$}
\end{equation}
\begin{equation}
\mathcal{L}_{\text{SupConProto}} = \mathcal{L}_{\text{SupConProto}}^{0} + \mathcal{L}_{\text{SupConProto}}^{1}
\end{equation}

\section{Experimental setup}

\subsection{Datasets}

We utilize a total of seven datasets with binary class distributions that can be grouped into two categories: subsets of the iNaturalist21 (iNat21) dataset \cite{van2021benchmarking}, where we artificially control class imbalance, and real-world medical datasets that naturally exhibit binary distributions and class imbalances. Our three subsets of iNat21 comprise plants (oaks and flowering plants), insects (bees and wasps), and mammals (hoved animals and carnivores). For each subset, we fix the class ratio to 50\%-50\%, 95\%-5\%, and 99\%-1\%, while keeping the total number of samples constant. Our real-world medical datasets include a cardiac datasets curated from the UK Biobank population study \cite{sudlow_uk_2015}, two datasets from the medical MNIST collection, BreastMNIST and PneumoniaMNIST \cite{DBLP:journals/corr/abs-2110-14795}, and the FracAtlas dataset \cite{abedeen2023fracatlas}. Additional details about dataset characteristics and preprocessing are provided in supplementary \cref{app:dataset}.

\subsection{Network architecture and training}

In line with prior work and baselines, we use a ResNet-50 image encoder \cite{DBLP:journals/corr/HeZRS15}, and follow established pre-training protocols\cite{DBLP:journals/corr/abs-2002-05709,  DBLP:journals/corr/abs-2106-03719, DBLP:journals/corr/abs-2004-11362}. After pre-training, we fine-tune a linear layer using a balanced subset comprising 1\% of the training data and report accuracy on a balanced test set. As the medical datasets contain far fewer samples, we do not subsample them for fine-tuning and report the receiver operating characteristic area under the curve (AUC). Further information on the training protocols can be found in supplementary \cref{app:exp_setup}.

\subsection{Baselines}

In addition to standard SupCon~\cite{DBLP:journals/corr/abs-2004-11362} and weighted cross-entropy, we have included the five leading supervised contrastive learning methods for long-tailed datasets as baselines: parametric contrastive learning (PaCo)~\cite{DBLP:journals/corr/abs-2107-12028}, $k$-positive contrastive learning (KCL)~\cite{kang2021exploring}, targeted supervised contrastive learning (TSC)~\cite{li2022targeted}, subclass-balancing contrastive learning (SBC)~\cite{hou2023subclass}, and balanced contrastive learning (BCL)~\cite{zhu2022balanced}. We include results for KCL and TSC with 3 and 6 positives to fairly adapt them to the heavily imbalanced binary case. A brief description of each method can be found in \cref{sec:related_works:longtail}. Further details about the setup and tuning of the baselines is included in supplementary \cref{app:baselines}. Additional baselines using classical, non-contrastive strategies to handle class imbalance, such as focal loss, oversampling, and undersampling, can be found in supplementary \cref{app:add_baselines_conventional}.

\section{Results}
\label{sec:supAnalysis}

\begin{table*}[ht]
\centering
\caption{Balanced accuracy of all evaluated methods on three binary natural imaging datasets at varying degrees of class imbalance. We compare standard weighted cross-entropy loss and supervised contrastive learning (top rows) to five baselines for supervised contrastive learning on long-tailed distributions (middle rows) and our two proposed fixes (bottom rows). Supervised Minority strategy does not apply to balanced settings and thus it is not reported there.}
\footnotesize
\begin{tabular}{@{}l ccc ccc ccc@{}}
\toprule
\multirow{2}{*}{Method} & \multicolumn{3}{c}{Plants} & \multicolumn{3}{c}{Insects} & \multicolumn{3}{c}{Animals} \\
\cmidrule(lr){2-4} \cmidrule(lr){5-7} \cmidrule(lr){8-10}
 & 50\% & 5\% & 1\% & 50\% & 5\% & 1\% & 50\% & 5\% & 1\% \\
\midrule
Weighted CE & 81.1 & 61.4 & 60.1 & 82.4 & 63.4 & 62.8 & 70.7 & 61.9 & 57.3 \\
SupCon \cite{DBLP:journals/corr/abs-2004-11362} & 93.7 $\pm$ 0.6 & 56.2 $\pm$ 1.6 & 54.4 $\pm$ 2.0 & 93.3 $\pm$ 0.1 & 62.6 $\pm$ 0.9 & 56.4 $\pm$ 0.1 & 80.8 $\pm$ 1.2 & 54.4 $\pm$ 1.6 & 56.9 $\pm$ 1.8 \\
\midrule
PaCo \cite{DBLP:journals/corr/abs-2107-12028} & 91.5 $\pm$ 0.9 & 59.2 $\pm$ 1.4 & 55.9 $\pm$ 2.2 & 92.4 $\pm$ 2.2 & 66.4 $\pm$ 0.6 & 53.7 $\pm$ 1.2 & 79.2 $\pm$ 1.5 & 65.3 $\pm$ 1.1 & 55.4 $\pm$ 1.6 \\
KCL (K=3) \cite{kang2021exploring}  & 90.6 $\pm$ 0.7 & 87.6 $\pm$ 0.4 & 81.1 $\pm$ 0.3 & 89.8 $\pm$ 0.3 & 81.1 $\pm$ 1.0 & 73.3 $\pm$ 1.4& 81.8 $\pm$ 0.3 & 76.6 $\pm$ 0.6 & 71.2 $\pm$ 0.8 \\
KCL (K=6)  \cite{kang2021exploring}& 94.2 $\pm$ 0.4 & 86.6 $\pm$ 1.4 & 78.6 $\pm$ 0.5& 91.5 $\pm$ 0.6 & 79.8 $\pm$ 1.2 & 69.2 $\pm$ 2.1 & 82.6 $\pm$  0.6 & 75.3 $\pm$ 0.7 & 70.1 $\pm$ 2.0 \\
TSC  (K=3) \cite{li2022targeted}  & 93.4 $\pm$ 0.8 & 88.0 $\pm$ 0.5 & 79.4 $\pm$ 1.1 & 88.2 $\pm$ 0.6 & 81.2 $\pm$ 1.0 & 73.3 $\pm$ 1.6 & 82.4 $\pm$ 2.5 & 76.1 $\pm$ 0.9& 71.2 $\pm$ 0.8 \\
TSC (K=6) \cite{li2022targeted}  & 94.6 $\pm$ 0.4 & 87.5 $\pm$ 0.7 & 80.1 $\pm$ 0.4 & 91.1 $\pm$ 0.5 & 79.8 $\pm$ 1.3 & 71.2 $\pm$ 2.0 & 83.0 $\pm$ 1.3 &  75.2 $\pm$ 2.1  & 72.2 $\pm$ 0.9 \\
SBC \cite{hou2023subclass} & 75.4 $\pm$ 0.9 & 57.2 $\pm$ 2.6 & 55.6 $\pm$ 1.9 & 77.6 $\pm$ 1.6 & 51.8 $\pm$ 2.3 & 54.0 $\pm$ 3.8 & 72.4 $\pm$ 1.1 & 55.1 $\pm$ 1.3 & 56.1 $\pm$ 2.1 \\
BCL \cite{zhu2022balanced} & 94.1 $\pm$ 0.3 & 85.0 $\pm$ 4.4 & 71.3 $\pm$ 2.9 & \textbf{94.5 $\pm$ 0.1} & 80.0 $\pm$ 2.5 & 74.0 $\pm$ 0.1 & \textbf{86.2 $\pm$ 0.2} & 76.5 $\pm$ 0.5 & 60.3 $\pm$ 0.7 \\
\midrule
\rowcolor{Gray}
Sup Minority & --  & \textbf{89.8 $\pm$ 0.6} & \textbf{85.4 $\pm$ 0.5} & -- & \textbf{82.8 $\pm$ 1.1}& \textbf{78.8 $\pm$ 0.8} & -- & 77.9 $\pm$ 0.9 & \textbf{75.3 $\pm$ 0.5}\\
\rowcolor{Gray}
Sup Prototypes &   \textbf{95.1 $\pm$ 0.2}   & 88.7 $\pm$ 0.7 & 83.4 $\pm$ 1.8 & 93.0 $\pm$ 0.3  & 81.2 $\pm$ 0.7 & 73.7 $\pm$ 1.3 & 82.9 $\pm$ 0.7 & \textbf{79.2 $\pm$ 1.3} & 73.0 $\pm$ 1.3 \\
\bottomrule
\end{tabular}
\label{tab:inat21_results}
\end{table*}

First, we measure the performance of SupCon on binary imbalanced distributions, showing that it inversely correlates with dataset imbalance. Next, we show how the newly introduced SAA and CAC can diagnose representation space collapse, an issue that canonical alignment metrics fail to detect. We substantiate these findings by introducing a proof that provides a mathematical explanation for the observed behavior. Finally, we show the benefit of our proposed supervised contrastive learning strategies for binary imbalanced datasets compared to existing baselines for long-tailed distributions.

\subsection{SupCon performance on binary datasets degrades with increasing class imbalance}

We first evaluate SupCon on the three binary natural image datasets while varying the degree of class imbalance (see \cref{tab:inat21_results}). We observe a sharp drop in linear probing accuracy as class imbalance increases. Specifically, models trained with 1\% and 5\% minority class representation achieve downstream accuracies between 50\% and 60\%, compared to over 90\% accuracy in the balanced case. While SupCon outperforms the weighted cross-entropy baseline on balanced datasets by over 10\%, it underperforms this simple baseline by over 5\% on binary imbalanced distributions. We find that the performance of SupCon drops below that of weighted cross-entropy around 20\% imbalance, before completely collapsing between 5\% and 1\% (see supplementary \cref{app:imbalance_vs_performance}).

\begin{figure*}[htbp]
    \centering
    \includegraphics[width=0.9\textwidth]{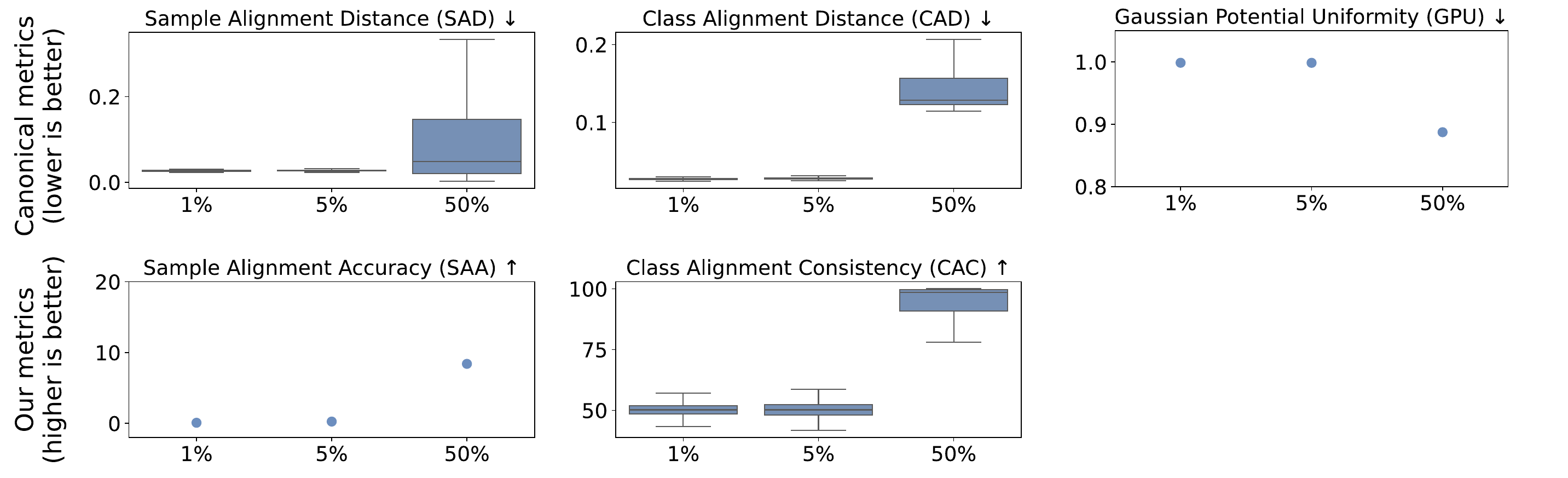}
    \caption{Boxplots of metrics analysing SupCon's representation space learned from the plants dataset. As class imbalance grows the representation space collapses despite the canonical SAD and CAD metrics being low. In contrast, SAA and CAC correctly identify the collapse. Similar results are observed on the insects and animals datasets (see supplementary \cref{app:align_unif}).}
    \label{fig:sup_metrics}
\end{figure*}

\subsection{Analyzing representation spaces using canonical metrics and novel metrics}

To understand the underlying causes of the observed degradation, we analyze the learned representation spaces using both the canonical and new metrics (see \cref{fig:sup_metrics}). Both the canonical SAD and CAD are close to zero across all imbalances, suggesting high alignment of the learned representation space. However, they fail to put the distance in context to samples from other instances or classes. In comparison, our novel SAA and CAA metrics indicate that embeddings do not form distinct class-wise clusters. An SAA of 0 shows that the learned embeddings cannot differentiate between samples by input semantics. A CAC close to 50\% suggests that both minority and majority class samples are almost equally mixed in local neighborhoods. Together, the new metrics confirm that SupCon’s representation space collapses under heavy imbalance.


To further substantiate this empirically observed behavior, we present a proof in supplementary \cref{app:proof}. The proof shows that gradients in the final network layer are upper-bounded by the inverse of the number of positives for a given sample. When the majority class dominates training batches, the gradient quickly saturates, preventing meaningful updates for that class and causing collapse towards a single point in the representation space.


\begin{table*}[ht]
\caption{Area under the curve (AUC) of all evaluated methods on four medical imaging datasets. We compare standard weighted cross-entropy loss and supervised contrastive learning (top rows) to five baselines for supervised contrastive learning for long-tailed distributions (middle rows) and our two proposed fixes (bottom rows).}
\centering
\footnotesize
\begin{tabular}{@{}l cc cc c@{}}
\toprule
\multirow{2}{*}{Method} & \multicolumn{1}{c}{UKBB} & \multicolumn{2}{c}{MedMNIST} & FracAtlas \\
\cmidrule(lr){2-2} \cmidrule(lr){3-4} \cmidrule(lr){5-5}
  & Infarction (4\%) & BreastMNIST (37\%) & PneumoniaMNIST (35\%) & Fractures (21\%)\\
\midrule
Weighted CE & 72.4 & 75.1 &  98.8 & 79.8\\ 
SupCon \cite{DBLP:journals/corr/abs-2004-11362} & 61.9 $\pm$ 1.9 & 75.1 $\pm$ 0.7 & 99.5 $\pm$  0.1 &84.8 $\pm $0.1\\
\midrule
PaCo \cite{DBLP:journals/corr/abs-2107-12028} & 66.6 $\pm$ 1.3 & 66.0 $\pm$ 1.9 & 98.7 $\pm$ 0.2 & 83.7 $\pm$ 0.6 \\
KCL (K=3) \cite{kang2021exploring}          &  75.3 $\pm$ 0.4 & 89.9 $\pm$ 0.8 & 99.6 $\pm$ 0.1 &  \textbf{88.2 $\pm $ 0.1}\\
KCL (K=6) \cite{kang2021exploring}          & 73.6 $\pm$ 0.2 & 89.5 $\pm$ 0.4 & 98.9 $\pm$ 0.1 & 86.5 $\pm $ 0.1 \\
TSC (K=3) \cite{li2022targeted}             & 75.7 $\pm$ 0.1 & 89.2 $\pm$ 0.1 &99.5 $\pm$ 0.1 & 87.1 $\pm$ 0.1 \\
TSC (K=6) \cite{li2022targeted}             & 75.0 $\pm$ 0.1 & 88.5 $\pm$ 0.1 & 99.5 $\pm$ 0.1 & 86.3 $\pm$ 0.1\\
SBC \cite{hou2023subclass}                  & 70.0 $\pm$ 0.3 & 80.8 $\pm$ 0.7 & 99.3 $\pm$ 1.2 & 80.9 $\pm$ 5.3\\
BCL \cite{zhu2022balanced}                  & 74.0 $\pm$ 0.1 & 90.5 $\pm$ 0.1 & 99.6 $\pm$ 0.1 & 84.9 $\pm$ 0.1\\
\midrule
\rowcolor{Gray}
Sup Minority                                &  77.7 $\pm$ 1.1 & 86.4 $\pm$ 0.2 & 99.6 $\pm$ 0.1 & 82.3 $\pm $0.7\\
\rowcolor{Gray}
Sup Prototypes                              & \textbf{77.9 $\pm$ 0.4}  & \textbf{90.7 $\pm$ 0.5} & \textbf{99.8 $\pm$ 0.1} & 86.0 $\pm$ 0.1\\
\bottomrule
\end{tabular}
\label{tab:medical_results}
\end{table*}

\subsection{Performance of fixes on natural and medical imaging datasets}

Next, we compare our two fixes, Supervised Minority and Supervised Prototypes, against five established baselines for long-tailed supervised contrastive learning. Our Supervised Minority fix achieves the best linear probing performance across all iNat21 datasets (see \cref{tab:inat21_results}), outperforming SupCon by 20\% to 35\%.

Our Supervised Minority fix also surpasses the performance of the five baselines developed for long-tailed datasets. Compared to these, its effectiveness increases at higher class imbalance. At 5\% class imbalance it outperforms all baselines by at least 1\%, and at 1\% imbalance by a margin of 3\%. Supervised Prototypes performed second best on all natural imaging datasets.

Supervised Prototypes performed best on three of the four medical datasets (see \cref{tab:medical_results}). 
Both of our fixes generally match or outperform all five baselines developed for long-tailed data distributions.
On the infarction dataset which has the strongest imbalance (4\%) we see the largest gain of +2\% AUC over the best performing baseline.

Extensive ablations across temperatures, batch sizes and varying degrees of supervision in both the minority and majority class can be found in supplementary \cref{app:ablations}.
Visualizations of the learned embeddings via UMAP in supplementary \cref{app:umap_supervised} also corroborate that Supervised Minority and Supervised Prototypes avoid the representation collapse observed in standard SupCon.

\subsection{Our proposed metrics correlate with downstream classification performance}

\begin{figure*}[htbp]
    \centering
    \includegraphics[width=1.0\textwidth]{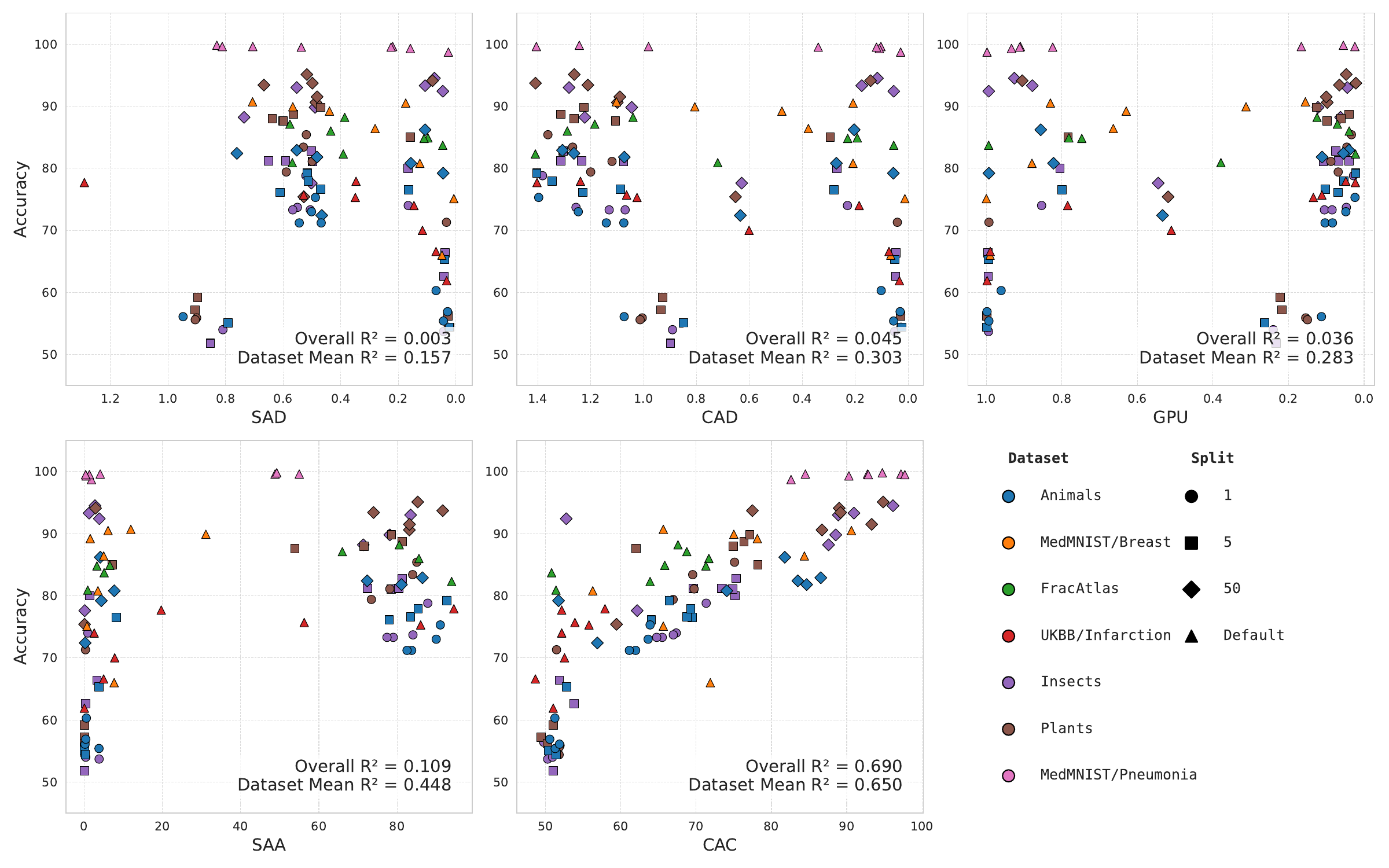}
    \caption{Correlations between five representation-space metrics and linear probing performance across all datasets and all considered methods. The overall $R^2$ is calculated globally over all points while the dataset mean $R^2$ is calculated per dataset and then averaged. As SAA and CAC are the only metrics that account for relationships between samples and classes instead of simply within them, they correlate much stronger with downstream performance.}
    \label{fig:cac_correlation}
\end{figure*}

Classical uniformity (GPU) and alignment (SAD) metrics focus on pairwise sample-level relationships without considering class-level context. While class alignment distance (CAD) incorporates intra-class similarity for supervised contrastive learning, it does not account for the critical inter-class relationships that influence downstream separability.

As shown in \cref{fig:cac_correlation}, the canonical metrics exhibit near-zero correlation with linear probing accuracy when plotted globally across all datasets. 
In contrast, our class alignment consistency achieves an R\textsuperscript{2} value of 0.69, indicating a strong global linear relationship with downstream classification accuracy.

When considering each dataset individually and averaging the correlations over all datasets, we find weak correlations between 0.15 and 0.3 for the classical metrics.
In contrast, our novel metrics achieve a mean $R^2$ value of 0.45 and 0.65 due to the fact that they consider inter-sample and inter-class statistics.
This shows the suitability of our metrics for evaluating representation space quality for downstream utility both on a global and local scale.


\section{Discussion and conclusion}

Although imbalanced binary distributions are commonly found in real-world machine learning problems and especially in medical applications, they have received little attention in the context of supervised contrastive learning. 
In extensive experiments on seven natural and medical imaging datasets, we have shown that SupCon on binary imbalanced distributions often results in collapsed representations, leading to poor downstream performance.

To diagnose these failures, we introduced two new metrics: Sample Alignment Accuracy (SAA) and Class Alignment Consistency (CAC) which extend the notion of alignment to measure how well samples and classes are distinguished from each other in the learned space.
These metrics uncovered shortcomings that canonical measures overlooked, showing that SupCon fails to form meaningful embeddings under high imbalance.
Crucially, CAC correlates well with linear probing accuracy and is thus a suitable metric for measuring representation space quality for downstream applications.

Finally, we proposed two fixes, \emph{Supervised Minority} and \emph{Supervised Prototypes}, specifically tailored to address binary imbalance. Both solutions boost accuracy by up to 35\% over standard SupCon and surpass existing methods for long-tailed distributions by up to 5\%. With minimal additions to the standard SupCon loss and negligible computational overhead, these fixes offer a straightforward path to improved performance on binary imbalanced classification problems.

\paragraph{Limitations}

A limitation of our methods is that the Supervised Minority fix cannot be used when data is balanced (as there is no minority class) and there does not seem to be a particularly clear pattern when Supervised Minority performs better than Supervised Prototypes or vice versa. We observed that Supervised Prototypes always achieved the best performance on medical datasets, while Supervised Minority usually performs better on natural image datasets. We hypothesize that this could either be due to the domain-specific data characteristics or a dependence of both methods on the degree of class imbalance that slightly differed in our experiments. When choosing an approach for a new dataset, it would be prudent to test both methods.

\paragraph{Conclusion} Our study complements and extends a series of previous works that have aimed to explore the theoretical foundations of contrastive learning
and researched its application to long-tailed datasets.
By focusing on the particularly challenging case of binary imbalanced datasets, we have improved the understanding of the dynamics of contrastive learning and developed tools to diagnose and enhance methods dealing with such datasets, which are very common in real-world applications, such as medicine. 

\paragraph{Acknowledgments}
This research has been conducted using the UK Biobank Resource under Application Number 87802.
This work was supported in part by the European Research Council grant Deep4MI (Grant Agreement no. 884622). Martin J. Menten is funded by the German Research Foundation under project 532139938.

%

{
    \small
    \bibliographystyle{ieeenat_fullname}
    \bibliography{main}
}

\FloatBarrier
\clearpage

\clearpage
\setcounter{figure}{0}
\setcounter{table}{0}
\setcounter{section}{0}

\makeatletter
\renewcommand \thesection{S\@arabic\c@section}
\renewcommand\thetable{S\@arabic\c@table}
\renewcommand \thefigure{S\@arabic\c@figure}
\makeatother

\section{Datasets}
\label{app:dataset}

Real-world imbalanced datasets lack the controlability needed to study different levels of imbalance systematically. Therefore, we initially employed a highly controllable artificial dataset for preliminary experiments, only to then validate the results on real-world medical datasets that naturally exhibit binary distributions and class imbalance.

The artificial datasets were derived from the iNaturalist21 dataset \cite{van2021benchmarking}. We split this dataset into binary subsets based on taxonomic ranks and sub-sample them with varying levels of class imbalance. After identifying representation space issues and developing solutions using the controlled datasets, we validated our approach on four real-world medical datasets with natural binary class distributions: PneumoniaMNIST and BreastMNIST from Medical MNIST \cite{DBLP:journals/corr/abs-2110-14795}, a cardiac dataset from the UK Biobank \cite{sudlow_uk_2015} and FracAtlas \cite{abedeen2023fracatlas}.

\subsection{Artificially imbalanced datasets (iNat21)}

Dataset selection is critical in (supervised) contrastive learning, as class semantics directly influence the learning process. High intra-class similarity (intra-class homogeneity) enhances the learning of discriminative features within classes, while low inter-class similarity (inter-class heterogeneity) aids in distinguishing between classes \cite{DBLP:journals/corr/abs-2008-10150, wang2022chaos}.  Tsai et al. \cite{DBLP:journals/corr/abs-2006-05576} emphasized the need for latent classes to embody task-relevant information within the training data.

To cover these effects, we select subsets of the iNaturalist 2021 (iNat21) dataset with different levels of homogeneity in and between classes. We determine semantic homogeneity between classes by using the hierarchical taxonomy, measuring class distances by steps in the taxonomy tree. Within-class heterogeneity is assessed based on the number of subspecies, taxonomic rank, and visual similarities of species, habitats, and backgrounds.

Based on these criteria, we selected three class categories:

$$
\begin{array}{|l|c|c|}
\hline
\textbf{Dataset} & \textbf{Intra-class} & \textbf{Inter-class} \\
\hline
\text{Plants} & \text{Mixed*} & \text{Heterogeneous} \\
\text{Insects} & \text{Homogeneous} & \text{Homogeneous} \\
\text{Animals} & \text{Heterogeneous} & \text{Heterogeneous} \\
\hline
\end{array}
$$
*One class homogeneous, one class heterogeneous

\subsubsection{Plants dataset (asymmetric)}

\begin{figure}[ht]
    \centering
    \begin{subfigure}[b]{\linewidth}
        \includegraphics[width=\textwidth]{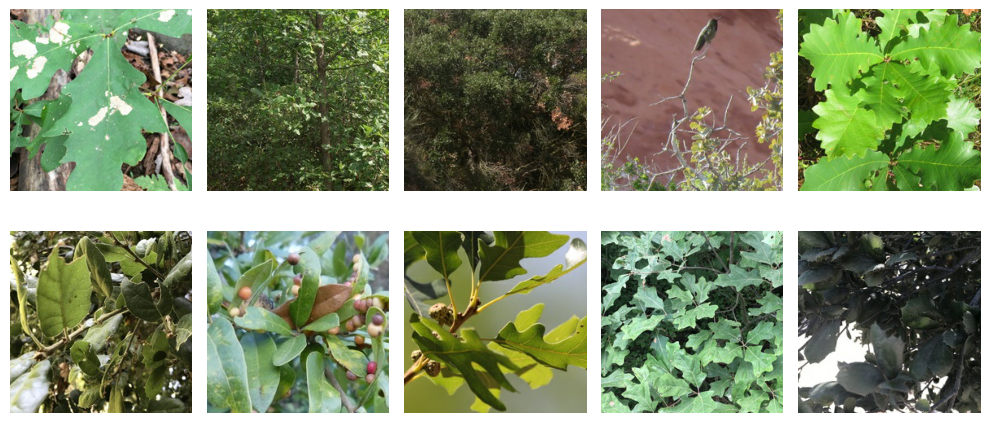}
        \caption{The homogeneous genus Quercus (oak)}
        \label{fig:oak_subfig}
    \end{subfigure}
    \begin{subfigure}[b]{\linewidth}
        \includegraphics[width=\textwidth]{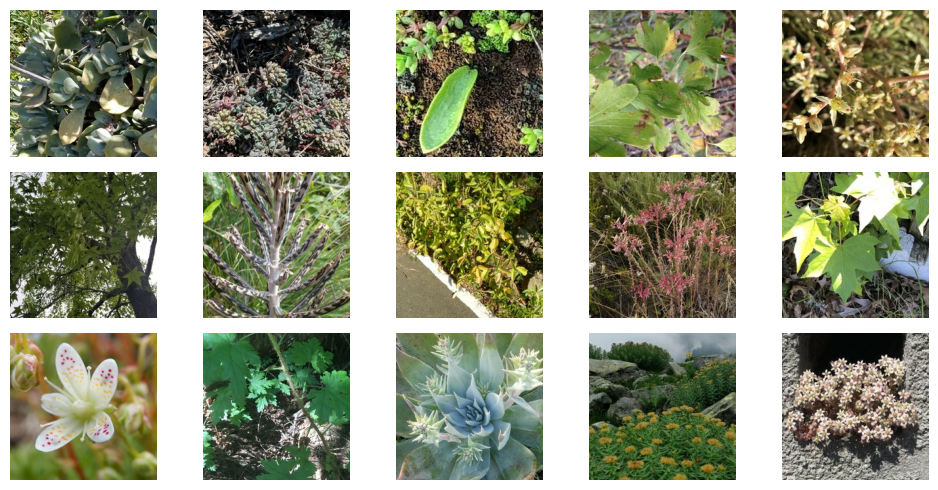}
        \caption{The heterogeneous Saxifragales order}
        \label{fig:saxi_subfig}
    \end{subfigure}
    \caption{The plants dataset illustrated by the homogeneous Quercus genus with its visually similar leaves and trees, and contrasting with the Saxifragales order, which exhibits high inner-class heterogeneity with a diverse array of plant forms from flowers to cacti, bushes, and trees.}
    \label{fig:asymmetric_dataset}
\end{figure}

The dataset presents a clear contrast between its homogeneous and heterogeneous classes, marked by both inner-class characteristics and high between-class heterogeneity. Within this dataset, the Quercus genus, categorized under the taxonomy \textit{Plantae} $\rightarrow$ \textit{Tracheophyta} $\rightarrow$ \textit{Magnoliopsida} $\rightarrow$ \textit{Fagales} $\rightarrow$ \textit{Fagaceae} $\rightarrow$ \textit{Quercus} (\cref{fig:oak_subfig}), represents a homogeneous class with 11,785 instances across 43 species. This class is characterized by low inner-class heterogeneity, exhibiting minimal variance within the class, with visually similar leaves and trees.

In contrast, the axifragales order, following the taxonomy \textit{Plantae} $\rightarrow$ \textit{Tracheophyta} $\rightarrow$ \textit{Magnoliopsida} $\rightarrow$ \textit{Saxifragales} (\cref{fig:saxi_subfig}), serves as the heterogeneous class with 21,641 instances spanning 82 species. This class encompasses a wide variety of plant forms, including trees, shrubs, herbs, succulents, and aquatic plants, contributing to its high inner-class heterogeneity.

The distance between classes in the taxonomy tree is small, as Saxifragales is an order and thus two levels higher in the hierarchy than Quercus, a genus. This contrasts with the significant differences in class diversity and characteristics, emphasizing the dataset's asymmetry.

\subsubsection{Insects dataset (homogeneous)}
\begin{figure}[ht]
    \centering
    \begin{subfigure}[b]{1\linewidth}
        \includegraphics[width=\textwidth]{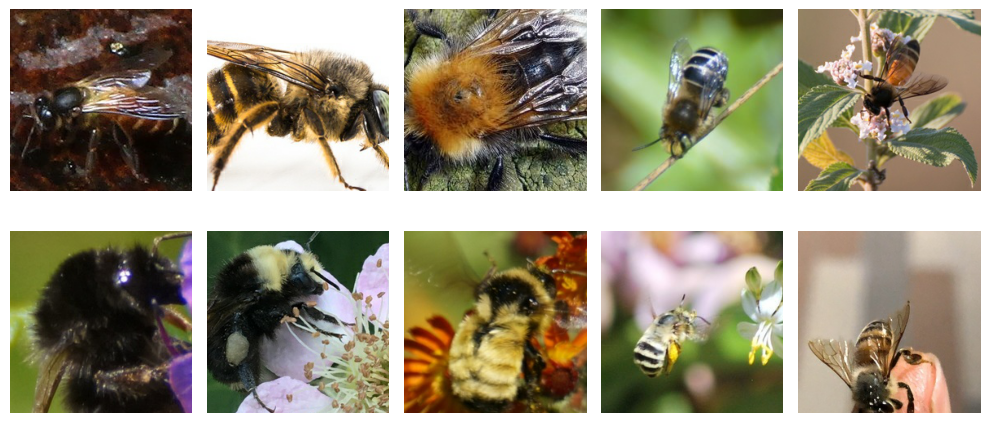}
        \caption{Apidae class (bees)}
        \label{fig:bees_subfig}
    \end{subfigure}
    \hfill
    \begin{subfigure}[b]{1\linewidth}
        \includegraphics[width=\textwidth]{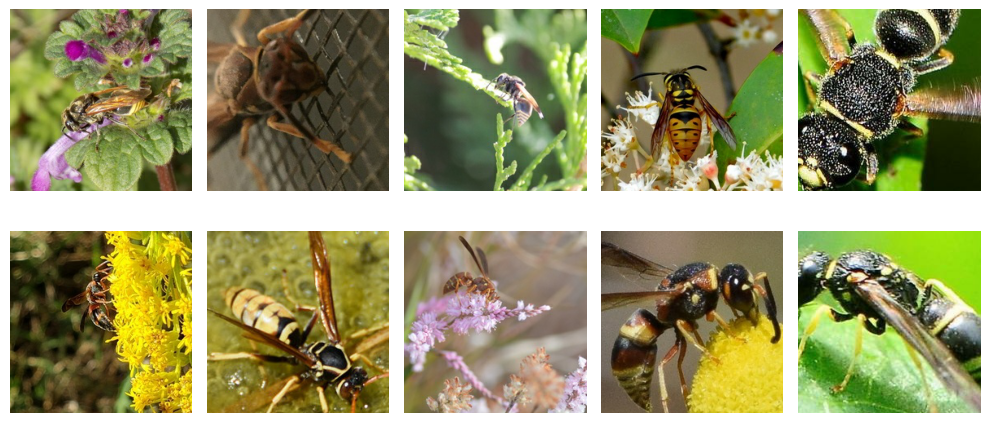}
        \caption{Vespidae class (wasps)}
        \label{fig:wasps_subfig}
    \end{subfigure}
    \caption{Representative images from the insects (homogeneous) dataset showcasing the two closely related classes, Apidae (bees) and Vespidae (wasps), exemplifying the dataset's homogeneity. Both classes demonstrate consistent visual characteristics, high intra-class homogeneity, and maintain a short taxonomic branch distance, underlining their similarities while retaining distinct biological traits.}
    \label{fig:combined_homogeneous_classes}
\end{figure}
This dataset comprises two closely related and homogeneous classes, Apidae (bees, \cref{fig:bees_subfig}) and Vespidae (wasps, \cref{fig:wasps_subfig}). These classes are neighbors in the taxonomy tree with a branch distance of two, both belonging to the hierarchy level of family. They display similarities in species count, sample numbers, and visual characteristics, including consistent backgrounds in photography.

The Apidae family, which consists mainly of bees, is represented by 11,740 samples spanning 38 species. The Vespidae family comprises 9,929 samples distributed across 42 species. Both families share a common taxonomic hierarchy, underlining their similarities while retaining distinct biological traits.

\subsubsection{Mammals dataset (heterogeneous)}

\begin{figure}[ht]
    \centering
    \begin{subfigure}[b]{\linewidth}
        \includegraphics[width=\textwidth]{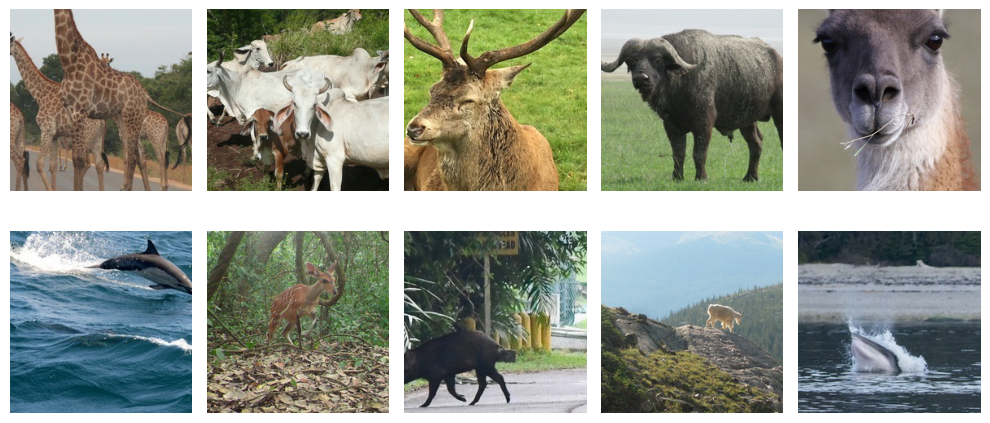}
        \caption{Artiodactyla class}
        \label{fig:artiodactyla_subfig}
    \end{subfigure}
    \begin{subfigure}[b]{\linewidth}
        \includegraphics[width=\textwidth]{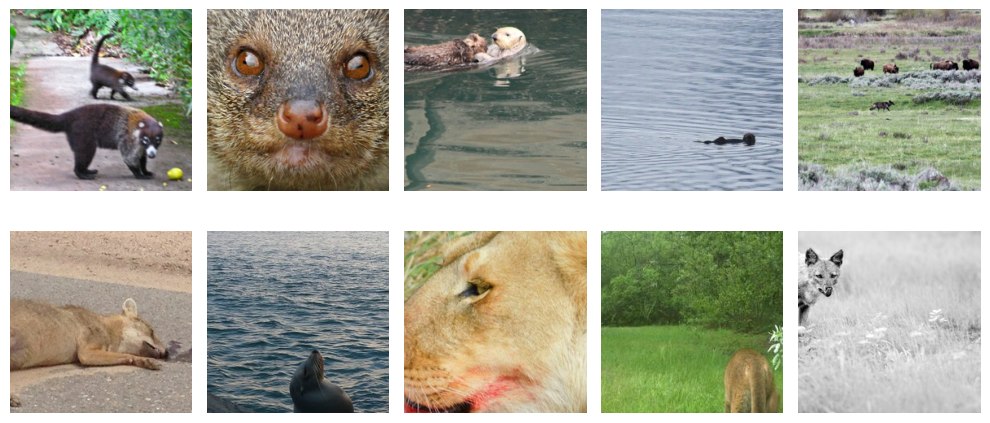}
        \caption{Carnivora order}
        \label{fig:carnivora_subfig}
    \end{subfigure}
    \caption{Representative images from the animals (heterogeneous) dataset illustrating heterogeneity through examples from the diverse Artiodactyla class and the Carnivora order. The Artiodactyla exemplify inner-class diversity with species ranging from dolphins to giraffes and bovines, while the Carnivora, showing a similar diversity, includes species such as lions, ferrets, and sea lions. These images underscore the dataset's broad spectrum of biological diversity.}
    \label{fig:combined_heterogeneous_classes}
\end{figure}

This dataset focuses on two highly diverse classes of mammals, Artiodactyla (\cref{fig:artiodactyla_subfig}) and Carnivora (\cref{fig:carnivora_subfig}), both of which demonstrate significant inner-class and between-class heterogeneity. The Artiodactyla order, classified under Animalia, comprises 15,917 samples across 54 species. This group includes a wide range of species, such as deer, antelopes, bovines, dolphins, and giraffes, each with distinct morphological traits.

 Similarly, the Carnivora order, contains 15,360 samples distributed among 55 species. 
 This class encompasses predators and omnivores like bears, felines, canines, ferrets, and sea lions.

Artiodactyla species differ significantly from those in Carnivora, living in different ecological habitats and exhibiting a wide range of physical characteristics, highlighting the dataset's high between-class heterogeneity.

\subsection{Dataset splits}
We sub-sampled our datasets to enable training across any split ranging from 1\% to 99\% for both classes while maintaining a constant total sample size across all experiments. To achieve this, we initially downsampled the more populous class to match the size of the smaller class before artificially imbalancing the two (see \cref{tab:example_imabalance_count} for exact numbers).
\begin{table*}[ht]
    \centering
    \begin{tabular}{l||c|c|c|c|c|c}
    
     Imbalance Ratio & Total Samples & $1\% : 99\%$  & $5\% : 95\%$ &  $50\% : 50\%$ \\
         \toprule
         Heterogeneous Dataset & $14,577$ & $145 : 14,432$ & $728 : 13,849$ & $7,289 : 7,289$ \\
         Homogeneous Dataset & $9,438$ & $94 : 9,344$ & $471 : 8,967$ & $4,719 : 4,719$ \\
         Asymmetric Dataset & $11,197$ & $111 : 11,086$ & $559 : 10,638$ & $5,599 : 5,599$
    \end{tabular}
    \caption{Distribution of samples across various levels of dataset imbalance. The table provides the count of samples for both classes in each scenario for heterogeneous, homogeneous, and asymmetric datasets. Test and validation sets are always balanced, ensuring valid comparisons between the splits.}
    \label{tab:example_imabalance_count}
\end{table*}
\subsection{Medical datasets}
\subsubsection{UK Biobank cardiac data}

Our first medical dataset originates from the UK Biobank, a comprehensive biomedical database containing genetic and health data from over 500,000 UK individuals \cite{sudlow_uk_2015}.
We used short-axis cardiovascular magnetic resonance (CMR) imaging data, originally comprising 46,656 subjects, each with a 4D MRI image stack. For our experiments, we utilized the middle slice of three time-points (End-Systolic, Mid-Systolic, and End-Diastolic) from each stack, which we encoded as an image's three channels. This dataset features class imbalances of 0.035 for infarction vs. rest and 0.086 for coronary artery disease (CAD) vs. rest. The labels were generated using the hospital admission ICD codes of the patients and include both past and future diagnoses. This was done to account for the fact that many cardiovascular diseases go undiagnosed for years until a severe event brings the patient into the hospital \cite{nesto1999screening,valensi2011prevalence,Hager_2023_CVPR}.

\begin{figure}
    \centering
    \includegraphics[width=\linewidth]{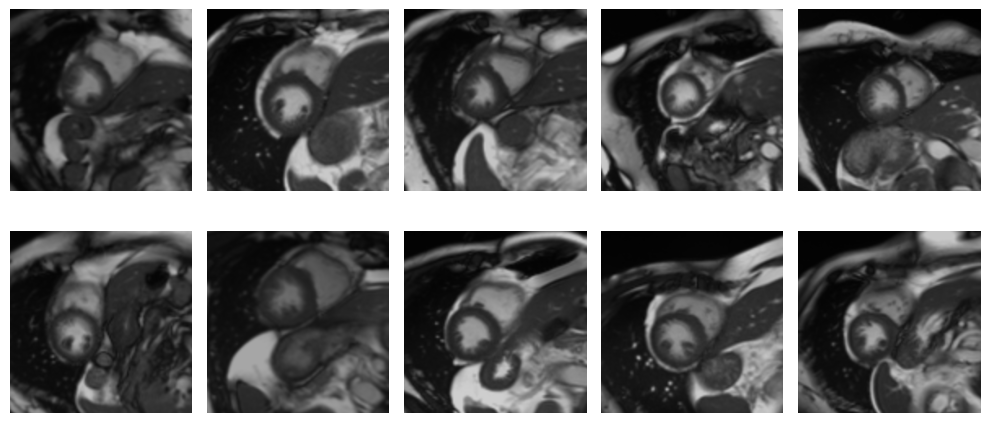}
    \caption{UKBB Cardiac}
    \label{fig:example_images_ukbb}
\end{figure}
\subsubsection{MedMNIST data}

MedMNIST\cite{DBLP:journals/corr/abs-2110-14795} provides standardized datasets for biomedical image classification with multiple size options: 28 (MNIST-Like), 64, 128, and 224 pixels. We chose the 224-pixel size and selected two datasets that naturally exhibit binary distributions. We use the original train, test and validation splits.

\paragraph{PneumoniaMNIST}
Derived from pediatric chest X-ray images, this dataset is used for binary classification of pneumonia with a class imbalance of 0.35 (positive) vs. 0.65 (negative).
See~\cref{fig:example_images_pneumonia} for some example images.
\begin{figure}
    \centering
    \includegraphics[width=\linewidth]{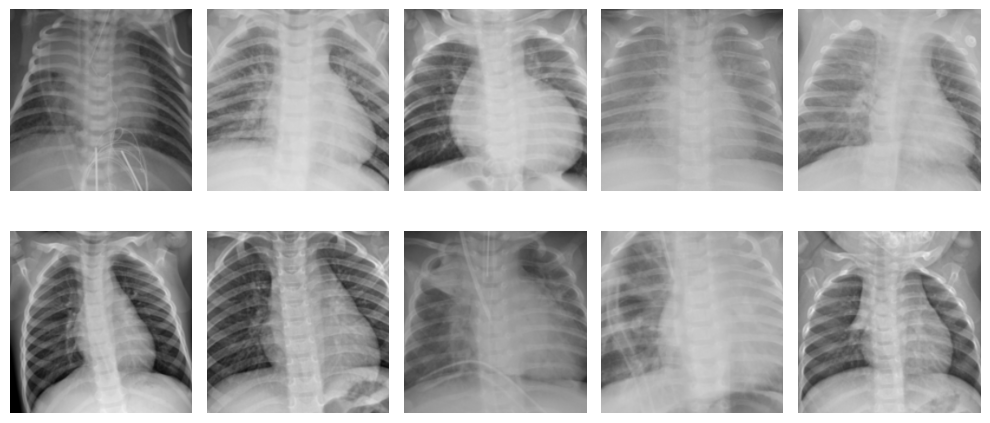}
    \caption{PneumoniaMNIST}
    \label{fig:example_images_pneumonia}
\end{figure}

\paragraph{BreastMNIST}
Sourced from breast ultrasound images, this dataset categorizes images into normal and benign (grouped as positive) and malignant (negative) with an imbalance of 0.368 (positive) vs. 0.632 (negative). See~\cref{fig:example_images_breast} for some example images.

\begin{figure}
    \centering
    \includegraphics[width=\linewidth]{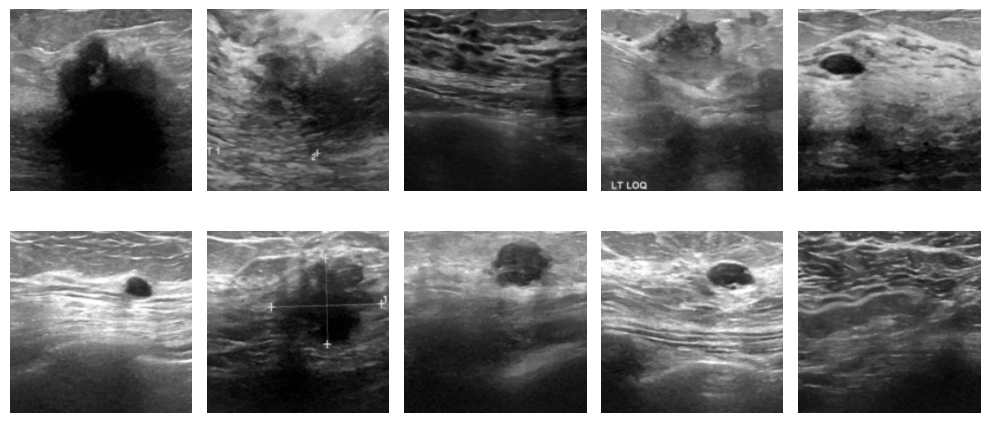}
    \caption{BreastMNIST}
    \label{fig:example_images_breast}
\end{figure}

\subsubsection{FracAtlas Dataset}

The FracAtlas dataset is a collection of medical imaging data focusing on bone fractures, published in Nature Scientific Data \cite{abedeen2023fracatlas}. It includes 4,024 X-ray images annotated by medical professionals, covering different types of fractures across multiple anatomical locations such as the femur, tibia, humerus, radius, and others. The dataset features a class imbalance representative of clinical settings, with approximately 0.21 (fractured) vs.\ 0.79 (non-fractured), reflecting the lower proportion of fracture cases compared to normal cases typically seen in clinical practice. We use a 80\%:10\%:10\% train, validation, and test splits. See~\cref{fig:example_images_frac_atlas} for some example images.

\begin{figure} 
\centering 
\includegraphics[width=\linewidth]{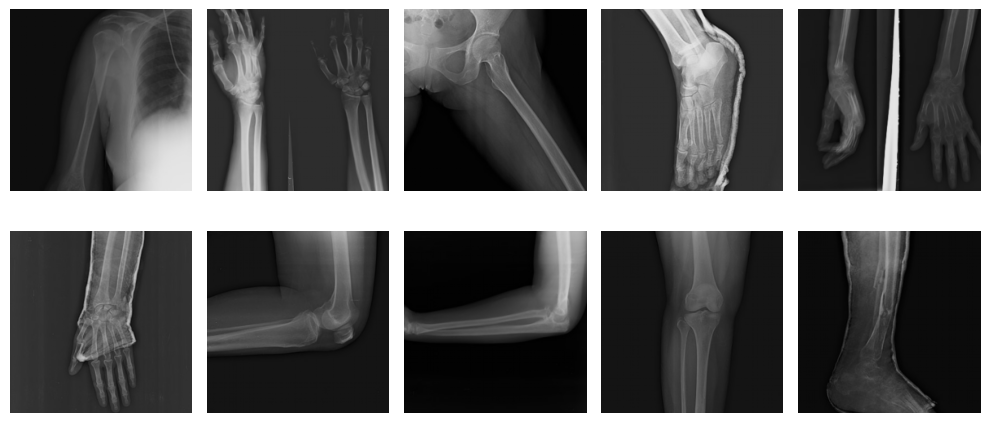} \caption{FracAtlas Dataset} \label{fig:example_images_frac_atlas} \end{figure}

\section{Experimental setup}
\label{app:exp_setup}

\subsection{Contrastive pre-training details}
All experiments in the main paper were conducted using a ResNet-50 backbone \cite{DBLP:journals/corr/HeZRS15}. The augmentations and linear projection head were adapted from the original SimCLR paper with a projection dimension of 128 \cite{DBLP:journals/corr/abs-2002-05709}. We set the batch size to 256, ensuring that at least one anchor of the minority class is included on average even at 1\% imbalance ratio. Training was conducted for 350 epochs. 

For optimization, we used stochastic gradient descent (SGD) with a momentum of 0.9 and a weight decay of $1 \times 10^{-4}$. A cosine annealing learning rate scheduler with a 10-epoch warm-up period \cite{DBLP:journals/corr/LoshchilovH16a} was utilized, starting with a learning rate of 0.00625 and warming up to 0.0625. We use a temperature value of 0.07. To ensure fairness, all approaches were trained for the same number of epochs and with the same backbone architecture. For all medical datasets, the training epochs were reduced to 250.

\subsubsection{Supervision in minority loss}

Training details are consistent with the setup described above. The only modification is the loss function, as described in \cref{app:def:minsup}.

\subsubsection{Supervised prototype loss}

Training details follow the same setup outlined above. The only difference is the loss function, detailed in \cref{app:def:supproto}.

\subsection{Weighted cross-entropy training details}

As a baseline comparison, we employed weighted cross-entropy to counteract class imbalance effects. The model was optimized using Adam \cite{kingma2017adammethodstochasticoptimization} with an initial learning rate of $1 \times 10^{-4}$ and no dropout or weight decay applied. The weight used for each class was the inverse of its frequency.

\subsection{Evaluation details}

To evaluate the quality of the learned representations, a linear probing protocol using a single linear layer was followed \cite{DBLP:journals/corr/abs-2002-05709,DBLP:journals/corr/abs-2106-03719,DBLP:journals/corr/abs-2004-11362}. All pre-trained encoder weights were frozen and the linear head was trained for 50 epochs using only resize and center crop augmentations. A subset of 1\% of the balanced pretraining dataset was used for linear probing along with a constant learning rate of $3 \times 10^{-4}$and the SGD optimizer with momentum of 0.9 and weight decay of $1 \times 10^{-4}$.

\subsection{Augmentations}
\subsubsection{iNat21}
For the iNat21 dataset, we used standard SimCLR transformations \cite{DBLP:journals/corr/abs-2002-05709}. These include random resized cropping to 224$\times$224 pixels and random horizontal flipping with a probability of 0.5. Additionally, we applied color jittering with a probability of 0.8 and random grayscaling with a probability of 0.2. Finally, we applied z-normalization to the images.

\subsubsection{MedMNIST \& FracAtlas}

For the MedMNIST and FracAtlas datasets, we applied augmentations specifically designed for grayscale medical images. Single-channel images were replicated across three channels. Subsequently, images were randomly cropped to a scale range of 25\% to 100\% of the original image area, with an aspect ratio range from 0.75 to 1.33, and then resized to the target size. Random horizontal flipping was applied with a probability of $p=0.5$, along with selective color jittering to adjust brightness and contrast within a range of $\pm15\%$ and a probability of $p=0.8$. Finally, images were z-normalized as part of the transformation process.

\subsubsection{UKBB Cardiac}

For the UKBB cardiac dataset, we used a combination of random horizontal flipping with a probability of $p=0.5$ and random rotations up to 45 degrees. Color adjustments were applied to jitter brightness, contrast, and saturation within a range of $\pm50\%$ and $p=0.8$. Additionally, images were randomly resized with a scale range of 20\% to 100\% of the original image size, cropped to 128 pixels, and finally z-normalized.

\subsubsection{Evaluation transforms}
For evaluation purposes, images were first resized to 256 pixels and then center-cropped to 224 pixels and z-normalized to maintain a consistent aspect ratio and size across all datasets.

\section{Baselines}
\label{app:baselines}
\subsection{\texorpdfstring{$k$}{k}-Positive Contrastive Learning (KCL)}

In the $k$-Positive Contrastive Learning (KCL) method, we draw $k$ instances from the same class to form the positive sample set. While the original paper by Kang \etal \cite{kang2021exploring} sets $k=6$, we also benchmarked with $k=3$ due to the pronounced class imbalances in our dataset. For strong imbalances, in some batches, there are not enough positive samples for the majority class, averaging only $\lceil2.56\rceil =3$. As demonstrated in the results section, we find that $k=3$ is more effective for heavy imbalances in the binary case.  We implemented the KCL loss directly in our pipeline and used the same hyperparameters as described above~\ref{app:exp_setup}.

Although KCL is only briefly described in the appendix and named differently, it was also mentioned in the original Supervised Contrastive Learning (SupCon) paper \cite{DBLP:journals/corr/abs-2004-11362}.

\subsection{Targeted Supervised Contrastive Learning for Long-Tailed Recognition (TSC)}

TSC extends KCL by introducing class prototypes. Instead of using the MoCo implementation from the authors' repository, we implemented the described loss within our SupCon framework for better comparability with other SupCon variations. We set the hyperparameter $\lambda = 1$ which weights the contribution of the prototypes to the total loss. Lambda is not specified in the original paper but was inferred from the authors' code. The remaining hyperparameters were set as above. We tested both $k=3$ and $k=6$ in our experiments.

\subsection{Balanced Contrastive Learning for Long-Tailed Visual Recognition (BCL)}
While TSC learns targets without explicit class semantics, BCL leverages class prototypes as additional samples. The BCL framework consists of a classification branch and a balanced contrastive learning branch, sharing a common backbone. The classifier weights are transformed by an MLP to serve as prototypes. We standardized the data augmentations with those used in other baselines for a fair comparison. The learning rate, batch size, and other training hyperparameters are identical to those described above.

\subsection{Subclass-Balancing Contrastive Learning for Long-Tailed Recognition (SBC)}

We utilized the original authors' code for SBC but replaced the backbone with the same ResNet architecture used in all our experiments. The original class imbalance was maintained as the imbalance factor. We removed the warm-up period during which only SupCon is applied, as our experiments have shown that SupCon collapses for our data. Clusters were updated every 10 epochs, as suggested by the authors' code. We employed the "train rule rank" with a ranking temperature of 0.2 and used $grama=0.25$ (grama in their code is called called $\beta$ in their paper \cite{hou2023subclass}), following the authors' recommendations. The rest of the hyperparameters are as described previously.

\subsection{Parametric contrastive learning baseline}

Paco is a MoCo-based \cite{DBLP:journals/corr/abs-1911-05722} strategy which we did not re-implement to SimCLR, considering momentum is a crucial component of the loss function. The parametric contrastive learning \cite{DBLP:journals/corr/abs-2107-12028} baseline follows the hyperparameter suggestions of the original paper, setting the alpha parameter ($\alpha$) to 0.05, the beta ($\beta$) and gamma ($\gamma$) parameters, which control the weighting of various losses, were both set to 1.0. Weight decay was set to $1 \times 10^{-4}$. The learning rate for this baseline was set at 0.0625. We used a MoCo-t temperature of 0.2, the MoCo queue size (MoCo-k) of 8192, and a MoCo embedding dimension (MoCo-dim) of 128. The momentum for the moving average encoder (MoCo-m) was set to 0.999.

\section{Additional baselines on traditional data imbalance strategies}
\label{app:add_baselines_conventional}

\begin{table}[h]
\centering
\footnotesize
\begin{tabular}{|l|c|c|c|c|c|c|}
\hline
& \textbf{P 5\%} & \textbf{P 1\%} & \textbf{I 5\%} & \textbf{I 1\%} & \textbf{A 5\%} & \textbf{A 1\%} \\
\hline
Focal & 53.7 & 51.8 & 59.7 & 54.2 & 57.5 & 56.9 \\
Oversample & 59.7 & 50.0 & 59.0 & 52.1 & 58.8 & 57.9 \\
Undersample & 59.8 & 58.7 & 55.0 & 52.9 & 57.7 & 51.0 \\
\hline
\end{tabular}
\vspace{-5pt} 
\caption{P = plants, I = insects, A = animals}
\vspace{-9pt} 
\label{table:supervised_baselines}
\end{table}

We also evaluated several non-contrastive methods for mitigating data imbalance: majority-class undersampling, minority-class oversampling, and focal loss~\cite{lin2018focallossdenseobject}. As shown in \cref{table:supervised_baselines}, these methods consistently underperform compared to both weighted cross-entropy and our proposed SupCon-based solutions. We conjecture that the extreme imbalance in our scenarios contributes to these results. At a 1\% minority class ratio, undersampling yields only 188–249 training samples (spread across 80–125 species per dataset), lacking variability, while oversampling repeats the minority class up to 99\% of the time, leading to overfitting.

\section{Representation collapse during imbalanced binary supervised contrastive learning}
\label{app:imbalance_vs_performance}


~\cref{appfig:imbalance_vs_performance} illustrates the balanced test performance averaged over the three datasets - animals, insects, and plants - for both SupCon and weighted cross-entropy (CE) across varying levels of dataset imbalance. We observe that SupCon consistently outperforms CE when the imbalance is low, indicating its superiority in balanced or slightly imbalanced scenarios. As the imbalance increases, a transition point emerges between 10\% and 7.5\% imbalance percentages, where the performance of SupCon is equal to that of CE. Beyond this point, SupCon's balanced test accuracy declines more sharply than CE's. In extreme imbalance conditions (e.g., 5\%, 2.5\%, and 1\%) CE outperforms SupCon. These findings suggest that SupCon is highly effective in moderate imbalance conditions but struggles with extreme imbalance which is common to real world medical data.
\begin{figure}
    \centering
    \includegraphics[width=\linewidth]{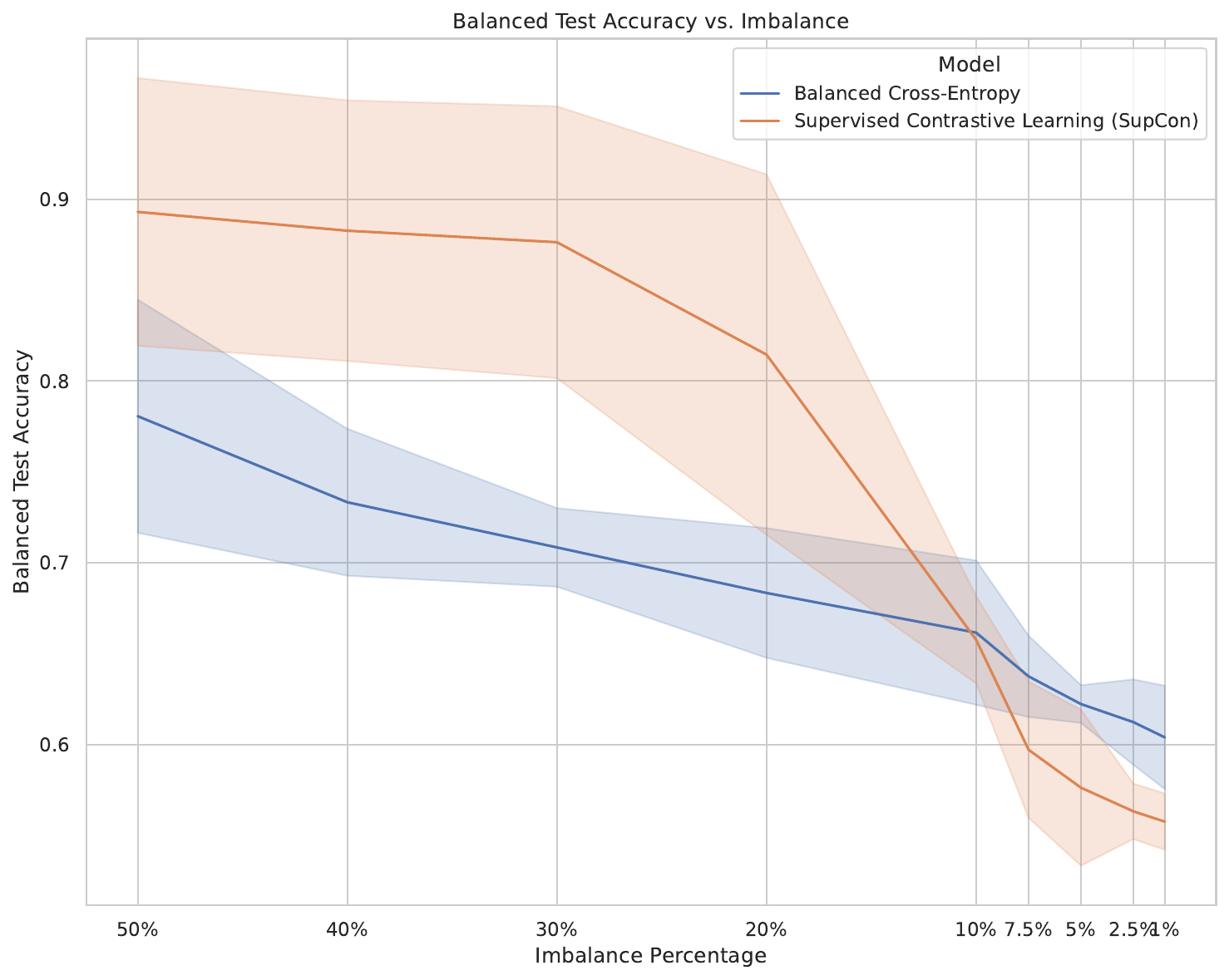}
    \caption{Balanced test performance averaged across three datasets - plants, insects, and animals - comparing SupCon and weighted cross-entropy under varying levels of dataset imbalance.}
    \label{appfig:imbalance_vs_performance}
\end{figure}

\FloatBarrier
\clearpage
\onecolumn
\section{Representation space analysis on insects and animals datasets}
\label{app:align_unif}

Similarly to the results in the main paper (see ~\cref{fig:sup_metrics}) SupCon exhibits a representation space collapses at high data imbalances on the insects and animals datasets. Despite the canonical SAD and CAD metrics being low, SAA and CAC correctly identify the collapse. We also see an indication in the elevated SAA and CAC values that the collapse of the insects dataset at 5\% imbalance was not quite as extreme as for the plants and animals datasets (62.6\% accuracy vs 56.2\% and 54.4\%). This trend is also visible but much less pronounced in SAD and CAD.

\begin{figure*}[htbp]
    \centering
    \includegraphics[width=.6\textwidth]{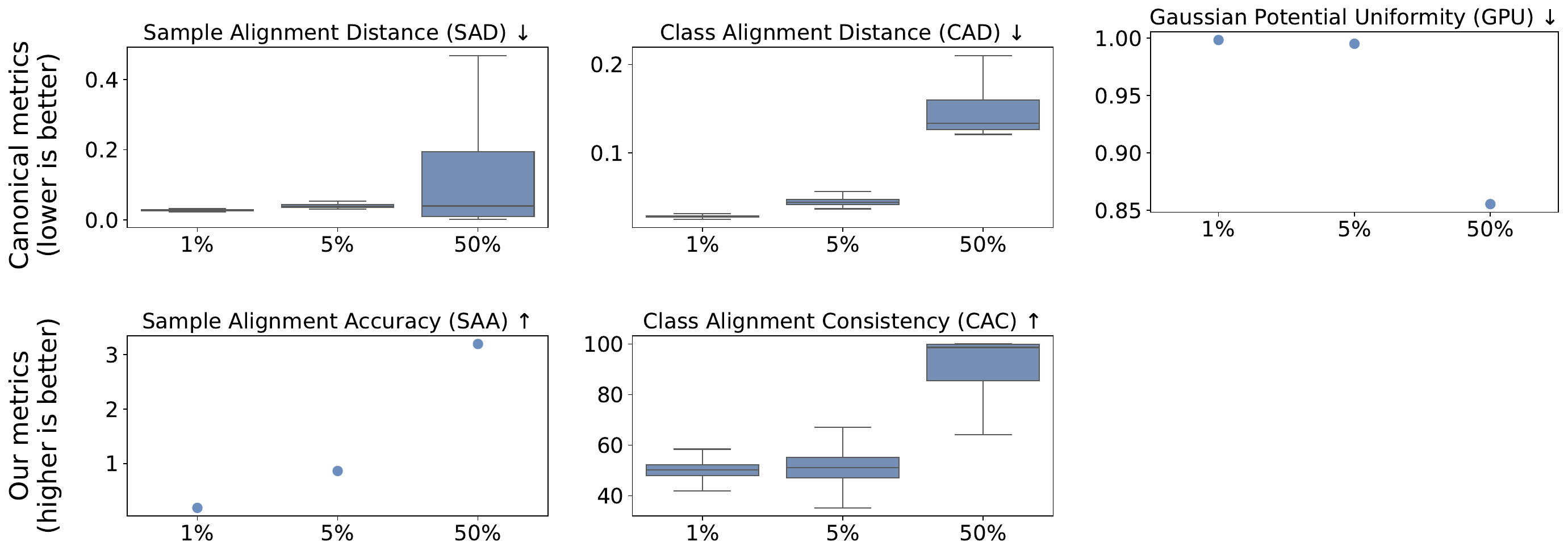}
    \caption{Analysis of SupCon's representation space learned from the insects dataset. }
    \label{fig:sup_metrics_animals}
\end{figure*}

\begin{figure*}[htbp]
    \centering
    \includegraphics[width=.6\textwidth]{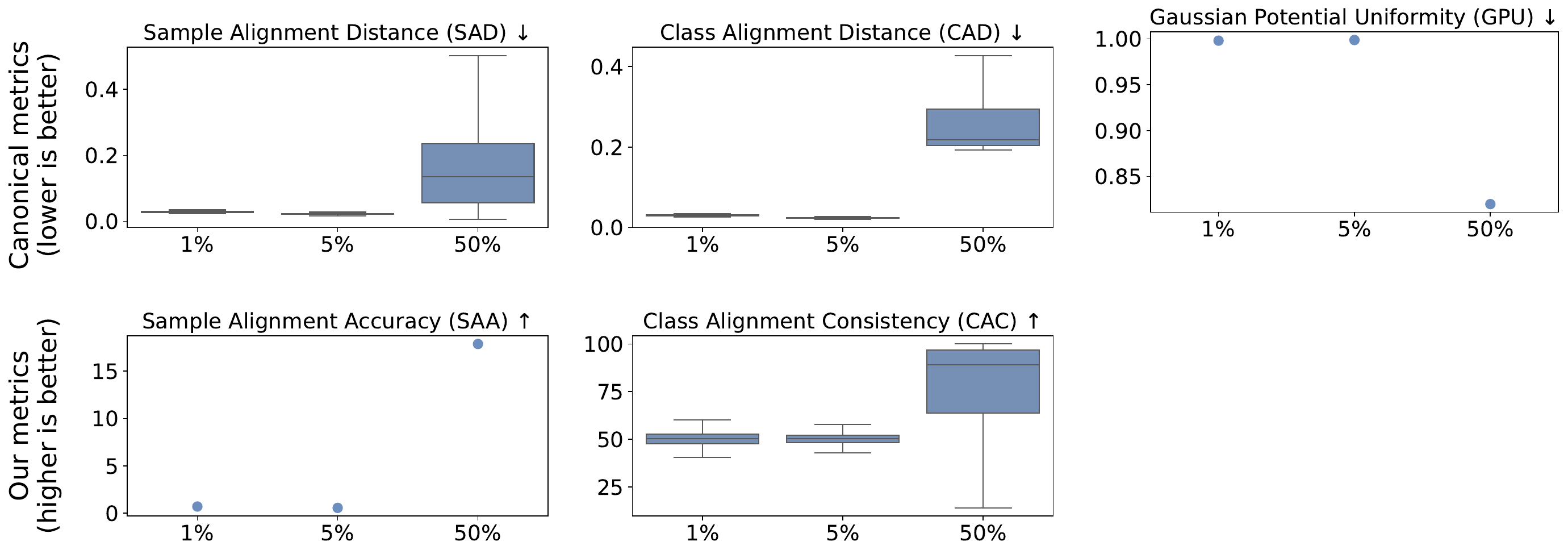}
    \caption{Analysis of SupCon's representation space learned from the animals dataset.}
    \label{fig:sup_metrics_insects}
\end{figure*}

\FloatBarrier
\clearpage
\onecolumn
\section{Proving supervised contrastive representation collapse for binary class imbalances}
\label{app:proof}

In the following, we demonstrate that in SupCon (see Equation \ref{eq:sup_con_loss}), the gradient of the output dimension can be effectively limited (upper bounded) by the count of positives associated with that sample. Consequently, an increase in the number of positives correlates with a decrease in the gradient magnitude.

\paragraph{Initial behavior of randomly initialized Resnet50}
\begin{table}[ht]
    \centering
    \small
    
    \begin{tabular}{c|c|c|c}
 
    \textbf{Model} & \textbf{Dataset} & \textbf{Mean Cosine Similarity} & \textbf{Standard Deviation} \\ \hline
    Randomly Initialized ResNet50 & Natural Images (Seed 1) & 0.9983 & 0.0012 \\ \hline
    Randomly Initialized ResNet50 & Natural Images (Seed 2) & 0.9986 & 0.0010 \\ \hline
    Randomly Initialized ResNet50 & Random Images & 0.9979 & 0.0010 \\ \hline
    Randomly Initialized ResNet50 & Pattern and Color Images & 0.9989 & 0.0027 \\ \hline
    Pretrained IMAGENET1K\_V2 & Natural Images & 0.0856 & 0.0635 
    \end{tabular}
    \caption{Initial Embedding Similarity in Randomly Initialized ResNet50}
    \label{tab:initial_embedding_similarity}
\end{table}
For our analysis, we make  assumptions regarding the initial state of the encoder output. At the start of training, the ResNet50 base encoder model \cite{DBLP:journals/corr/HeZRS15} is initialized with random weights. Liange et al. \cite{liang2022mindgapunderstandingmodality} have empirically shown that uninitialized ResNet models tend to map their inputs to almost identical vectors, with a cosine similarity exceeding 0.99. We confirm their findings in our own empirical study (\cref{tab:initial_embedding_similarity}).

In our study, we observed that regardless of the input type—be it natural images, random images, or artificially dissimilar images (such as inverted patterns and colors)—the randomly initialized model consistently mapped these diverse inputs to remarkably similar output embeddings. Based on these observations, we propose the following lemma:

\begin{lemma} \label{lemma:small_dis}
    Let $z_i,z_k \in \mathcal{S}^{128}$ be two projections of an uninitialized ResNet50 model and an uninitialized Projection layer. Then, 
    For a small $\varepsilon \in \mathbb{R}$, $\| z_i - z_k \| \leq \varepsilon$
    
\end{lemma}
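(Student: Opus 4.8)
The plan is to derive Lemma~\ref{lemma:small_dis} directly from the empirical observation, reported in Table~\ref{tab:initial_embedding_similarity}, that an uninitialized ResNet50 followed by an uninitialized projection head maps arbitrary inputs to embeddings whose pairwise cosine similarity exceeds roughly $0.99$. First I would make the standing assumption precise: for any two views $\Tilde{x}_i,\Tilde{x}_k \in \mathcal{W}$, the normalized projections $z_i = f(\Tilde{x}_i)$ and $z_k = f(\Tilde{x}_k)$ on $\mathcal{S}^{128}$ satisfy $z_i \cdot z_k \geq 1 - \delta$ for some small $\delta > 0$ (concretely $\delta \lesssim 0.01$, consistent with the mean cosine similarities in the table and corroborated by Liang et al.~\cite{liang2022mindgapunderstandingmodality}). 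This is the only nontrivial ingredient; everything else is a short computation on the unit sphere.

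The key step is the elementary identity for unit vectors: since $\|z_i\| = \|z_k\| = 1$, we have
\begin{equation}
\| z_i - z_k \|^2 = \|z_i\|^2 - 2\, z_i \cdot z_k + \|z_k\|^2 = 2 - 2\, z_i \cdot z_k = 2(1 - z_i \cdot z_k).
\end{equation}
Combining this with the cosine-similarity bound $z_i \cdot z_k \geq 1 - \delta$ gives $\| z_i - z_k \|^2 \leq 2\delta$, hence $\| z_i - z_k \| \leq \sqrt{2\delta}$. Setting $\varepsilon := \sqrt{2\delta}$ completes the argument: for the empirically observed $\delta \approx 0.01$ this yields $\varepsilon \approx 0.14$, and more importantly it shows $\varepsilon \to 0$ as the initial cosine similarity approaches $1$, which is the qualitative content needed for the downstream collapse argument.

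I would also briefly note why the normalization hypotheses hold: the projection layer output is $\ell_2$-normalized onto $\mathcal{S}^{d-1}$ by construction in the SimCLR/SupCon pipeline (as stated in the definitions in \cref{sec:definitions}, where $f : \mathcal{W} \rightarrow \mathcal{S}^{d-1}$), so $z_i$ and $z_k$ genuinely lie on the unit sphere and the identity above applies without caveat. The main ``obstacle'' here is not mathematical but a matter of honest framing: the lemma rests on an empirical premise rather than a derived fact, so I would be careful to state it as a modeling assumption justified by Table~\ref{tab:initial_embedding_similarity} and prior work, rather than claiming it follows from first principles about random ResNet weights. Given that premise, the proof is a two-line spherical-geometry computation and carries no further difficulty.
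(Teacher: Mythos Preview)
Your proposal is correct and, in fact, does slightly more than the paper itself. The paper does not give a proof of Lemma~\ref{lemma:small_dis} at all: it presents the lemma immediately after Table~\ref{tab:initial_embedding_similarity} with the sentence ``Based on these observations, we propose the following lemma,'' and then uses it as a standing assumption in Theorem~\ref{theo:1}. You supply the short spherical-geometry step (the identity $\|z_i-z_k\|^2 = 2(1-z_i\cdot z_k)$ on the unit sphere) that converts the empirically observed cosine similarity $\geq 1-\delta$ into the distance bound $\|z_i-z_k\|\leq\sqrt{2\delta}$, which the paper leaves implicit. Your explicit framing of the lemma as a modeling assumption grounded in the empirical table and in Liang et al.\ is exactly how the paper treats it, so there is no discrepancy in spirit; you have simply made the translation from cosine similarity to Euclidean distance rigorous where the paper asserts it.
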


Let \(\bm{w}_i\) denote the projection network output before normalization, i.e., \(\bm{z}_i = \frac{\bm{w}_i}{\|\bm{w}_i\|}\) \cite[p.~15]{DBLP:journals/corr/abs-2004-11362}. In our analysis, we focus on \(w_i\) rather than \(p_i\) since when \(w_i\) is small, even a minor modification followed by normalization results in a proportionally larger change. Consequently, the gradient magnitude for smaller values of \(w_i\) is amplified.

\noindent $A(i) \equiv I \setminus \{i\}$ is the set of all indices without the anchor $i$ in the multi-viewed batch. $P(i) \equiv \{ p \in A(i) : \tilde{y}_p = \tilde{y}_i \}$ is the set of indices of all positives in the multi-viewed batch distinct from $i$. $N(i) \equiv A(i) \setminus P(i)$ is the set of indices of all negatives in the multi-viewed batch.

Following Khosla \textit{et al.} \cite[p.~16]{DBLP:journals/corr/abs-2004-11362}, the gradient of the supervised loss in relation to $\bm{w}_i$, and restricted to $P(i)$ or $N(i)$ is
\begin{equation} \label{eq:2}
\frac{\partial \mathcal{L}^{sup}_i}{\partial \bm{w}_i}\Bigg|_{P(i)} = \frac{1}{\tau ||\bm{w}_i||} \sum_{p \in P(i)} (\bm{z}_p - (\bm{z}_i \cdot \bm{z}_p)\bm{z}_i)(P_{ip} - X_{ip})
\end{equation}

\begin{equation} \label{eq:3}
\frac{\partial \mathcal{L}^{sup}_i}{\partial \bm{w}_i}\Bigg|_{N(i)} = \frac{1}{\tau ||\bm{w}_i||} \sum_{n \in N(i)} (\bm{z}_n - (\bm{z}_i \cdot \bm{z}_n)\bm{z}_i)P_{in}
\end{equation}

\noindent
Summing Eqs. \ref{eq:2} and \ref{eq:3} gives us the gradient of the supervised loss with respect to $\bm{w}_i$:
\begin{equation}
\frac{\partial \mathcal{L}^{sup}_i}{\partial \bm{w}_i} = \frac{\partial \mathcal{L}^{sup}_i}{\partial \bm{w}_i}\Bigg|_{P(i)} + \frac{\partial \mathcal{L}^{sup}_i}{\partial \bm{w}_i}\Bigg|_{N(i)}
\end{equation}

\noindent
We further define following Khosla \textit{et al.} \cite[p.~16]{DBLP:journals/corr/abs-2004-11362}:

\begin{equation}
    P_{ix} \equiv \frac{\exp(\bm{z_i} \cdot \bm{z_x} / \tau)}{\sum_{a \in A(i)} \exp (\bm{z_i} \cdot \bm{z_a} / \tau)}
\end{equation}

\noindent
and

\begin{equation}
    X_{ip} \equiv \frac{1}{| P(i)|}
\end{equation}

\noindent

\begin{theorem} \label{theo:1}
Let us consider the context of Lemma \ref{lemma:small_dis}, where we assume for a small $\varepsilon \in \mathbb{R}$, that $\| \bm{z}_i - \bm{z}_j \| \leq \varepsilon$. Furthermore, given that $\bm{z}_i, \bm{z}_j \in \mathcal{S}^{128}$, we have $\| \bm{z}_i \| \cdot \| \bm{z}_j \| = 1$.
Under these conditions, the following inequality holds for the size of the output gradients:
    \begin{align*}
        \left\| \frac{\partial \mathcal{L}_i}{\partial \bm{w}_i} \right\|
        &\leq \frac{1}{\tau \|\bm{w}_i\|} (\varepsilon + \frac{1}{2} \varepsilon^2) \left( (1-\frac{|P(i)|}{|A(i)|}) \exp(-\varepsilon^2/ \tau) + (\exp(\varepsilon^2/ \tau) - 1) + (1 - \frac{|P(i)|}{|A(i)|})^2  \exp(\varepsilon^2/ \tau) \right) 
    \end{align*}
\end{theorem}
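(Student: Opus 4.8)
The plan is to bound the norm of the gradient $\partial \mathcal{L}_i^{sup}/\partial \bm{w}_i$ by controlling each of the three scalar factors that appear in Equations~\ref{eq:2} and~\ref{eq:3}: the vector factor $\bm{z}_x - (\bm{z}_i\cdot\bm{z}_x)\bm{z}_i$, the probability weights $P_{ix}$, and the residual $P_{ip}-X_{ip}$. First I would estimate the geometric factor: since $\|\bm{z}_i-\bm{z}_x\|\le\varepsilon$ and both lie on the unit sphere, the inner product satisfies $\bm{z}_i\cdot\bm{z}_x = 1 - \tfrac12\|\bm{z}_i-\bm{z}_x\|^2 \ge 1 - \tfrac12\varepsilon^2$. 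A short computation shows $\|\bm{z}_x - (\bm{z}_i\cdot\bm{z}_x)\bm{z}_i\|^2 = 1 - (\bm{z}_i\cdot\bm{z}_x)^2 = (1-\bm{z}_i\cdot\bm{z}_x)(1+\bm{z}_i\cdot\bm{z}_x) \le \tfrac12\varepsilon^2 \cdot 2 = \varepsilon^2$, but a sharper bound giving the stated $\varepsilon + \tfrac12\varepsilon^2$ factor would instead keep $\|\bm{z}_x - (\bm{z}_i\cdot\bm{z}_x)\bm{z}_i\| \le \|\bm{z}_x - \bm{z}_i\| + |1 - \bm{z}_i\cdot\bm{z}_x|\,\|\bm{z}_i\| \le \varepsilon + \tfrac12\varepsilon^2$ via the triangle inequality, which is the form that appears in the theorem.

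Next I would bound the exponential weights. Because $|\bm{z}_i\cdot\bm{z}_x - 1| \le \tfrac12\varepsilon^2$ for every pair, each term $\exp(\bm{z}_i\cdot\bm{z}_x/\tau)$ lies in the interval $[\exp((1-\tfrac12\varepsilon^2)/\tau),\ \exp((1+\tfrac12\varepsilon^2)/\tau)]$, so after cancelling the common factor $\exp(1/\tau)$ from numerator and denominator of $P_{ix}$, each $P_{ix}$ is squeezed between $\tfrac{1}{|A(i)|}\exp(-\varepsilon^2/\tau)$ and $\tfrac{1}{|A(i)|}\exp(\varepsilon^2/\tau)$. Summing over the $|N(i)| = |A(i)| - |P(i)|$ negatives gives the $(1 - |P(i)|/|A(i)|)\exp(-\varepsilon^2/\tau)$ contribution for the $N(i)$ part, and summing over $P(i)$ gives a bound on $\sum_{p}P_{ip}$ of the form $\tfrac{|P(i)|}{|A(i)|}\exp(\varepsilon^2/\tau)$. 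For the residual $P_{ip} - X_{ip}$ in the $P(i)$ term, I would write $\sum_{p\in P(i)}|P_{ip} - X_{ip}| = \sum_{p}|P_{ip} - \tfrac{1}{|P(i)|}|$; using $\sum_p P_{ip} \le \tfrac{|P(i)|}{|A(i)|}\exp(\varepsilon^2/\tau)$ and $\sum_p X_{ip} = 1$, together with the two-sided squeeze on each $P_{ip}$, this telescopes into a bound matching the $(\exp(\varepsilon^2/\tau)-1) + (1-|P(i)|/|A(i)|)^2\exp(\varepsilon^2/\tau)$ shape. Then I would apply the triangle inequality across both the $P(i)$ and $N(i)$ sums, pull out the common $\tfrac{1}{\tau\|\bm{w}_i\|}$ and the geometric factor $(\varepsilon+\tfrac12\varepsilon^2)$, and collect the remaining scalar bounds into the stated parenthesised expression.

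The main obstacle will be getting the residual term $\sum_{p\in P(i)}|P_{ip}-X_{ip}|$ to collapse into exactly the two summands $(\exp(\varepsilon^2/\tau)-1)$ and $(1-|P(i)|/|A(i)|)^2\exp(\varepsilon^2/\tau)$ rather than a looser expression; this requires carefully splitting $|P_{ip} - 1/|P(i)||$ according to the sign of the deviation and bounding the positive and negative parts separately, using both that $\sum_{a\in A(i)}P_{ia}=1$ (so the positive mass not on $P(i)$ is at most $1 - |P(i)|/|A(i)|$ times something) and the per-term squeeze. I would also need to double-check that the $\exp(\pm\varepsilon^2/\tau)$ exponents are the right ones — the pairwise bound $|\bm{z}_i\cdot\bm{z}_x - 1|\le\tfrac12\varepsilon^2$ naively gives $\exp(\pm\varepsilon^2/(2\tau))$, so the theorem's $\exp(\pm\varepsilon^2/\tau)$ suggests the authors bound a ratio of two such exponentials (difference of inner products up to $\varepsilon^2$), and I would track that factor-of-two bookkeeping carefully. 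Everything else is routine manipulation of unit-sphere geometry and softmax weights.
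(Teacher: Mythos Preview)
Your plan matches the paper's proof almost step for step: the same triangle-inequality bound $\|\bm{z}_x-(\bm{z}_i\cdot\bm{z}_x)\bm{z}_i\|\le\varepsilon+\tfrac12\varepsilon^2$, the same two-sided squeeze $\tfrac{1}{|A(i)|}\exp(-\varepsilon^2/\tau)\le P_{ix}\le\tfrac{1}{|A(i)|}\exp(\varepsilon^2/\tau)$ (and your factor-of-two instinct is exactly right --- the $\varepsilon^2/\tau$ in the exponent comes from a ratio of two $\exp((1\pm\tfrac12\varepsilon^2)/\tau)$ factors), and a sign-based case split for $|P_{ip}-X_{ip}|$.

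Two corrections to your bookkeeping. First, you have the attribution of the three final summands reversed: the $N(i)$ sum is an \emph{upper} bound on $\sum_n P_{in}$ and therefore must use the upper end of the squeeze, yielding the $(1-|P(i)|/|A(i)|)\exp(+\varepsilon^2/\tau)$ term; the $(1-|P(i)|/|A(i)|)\exp(-\varepsilon^2/\tau)$ and $(\exp(\varepsilon^2/\tau)-1)$ pieces both come from the $P(i)$ residual. Second, the paper does not use the global constraint $\sum_{a\in A(i)}P_{ia}=1$ that you invoke. Instead it bounds each $|P_{ip}-X_{ip}|$ individually: in the case $P_{ip}\ge X_{ip}$ it bounds $P_{ip}\le\exp(\varepsilon^2/\tau)/|A(i)|\le\exp(\varepsilon^2/\tau)/|P(i)|$, giving $(\exp(\varepsilon^2/\tau)-1)/|P(i)|$; in the case $X_{ip}\ge P_{ip}$ it bounds $P_{ip}$ from below and does a short fraction rearrangement to extract $\exp(-\varepsilon^2/\tau)|N(i)|/(|P(i)||A(i)|)$ plus a remainder. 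Multiplying the resulting per-term bound by $|P(i)|$ lands directly on the stated expression, which is cleaner than the telescoping you sketch.
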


\begin{proof}

\[
\left\| \frac{\partial \mathcal{L}^{sup}_i}{\partial \bm{w}_i}\Bigg|_{P(i)} \right\| \leq \frac{1}{\tau \|\bm{w}_i\|} \sum_{p \in P(i)} \| \bm{z}_p - (\bm{z}_i \cdot \bm{z}_p)\bm{z}_i\| | P_{ip} - X_{ip} |
\]

First we show:
\[
| P_{ip} - X_{ip} | \leq \exp(-\varepsilon^2/ \tau)\frac{|N(i)|}{|P(i)||A(i)|} + (\exp(\varepsilon^2/ \tau)-1) \frac{1}{|P(i)|}
\]

with 
\begin{align*}
    &\frac{\exp((1- \varepsilon^2 / 2) / \tau)}{\exp((1+ \varepsilon^2 / 2) / \tau)} \frac{1}{|A(i)|} \\
    &\leq \frac{\exp((1- \| z_i - z_p \|^2 / 2) / \tau)}{\exp((1+ \varepsilon^2 / 2) / \tau)} \frac{1}{|A(i)|} && \text{\cref{lemma:small_dis} }\\
    &= \frac{\exp((1- (z_i^2-2z_iz_p+z_p^2) / 2) / \tau)}{\exp((1+ \varepsilon^2 / 2) / \tau)} \frac{1}{|A(i)|} && \\
    &= \frac{\exp((1- (\frac{1}{2}-z_iz_p+\frac{1}{2})) / \tau)}{\exp((1+ \varepsilon^2 / 2) / \tau)} \frac{1}{|A(i)|} && \\
    &= \frac{\exp((1- (1-z_iz_p)) / \tau)}{\exp((1+ \varepsilon^2 / 2) / \tau)} \frac{1}{|A(i)|} && \\
    &= \frac{\exp((\bm{z}_i \cdot \bm{z}_p) / \tau)}{\exp((1+ \varepsilon^2 / 2) / \tau)} \frac{1}{|A(i)|} && \\
    &\leq \frac{\exp(\bm{z}_i \cdot \bm{z}_p / \tau)}{\sum_{a \in A(i)} \exp(\bm{z}_i \cdot \bm{z}_a / \tau)}  \\
    &\leq \frac{\exp((1+ \varepsilon^2 / 2) / \tau)}{\exp((1- \varepsilon^2 / 2) / \tau)} \frac{1}{|A(i)|}
\end{align*}

if $P_{ip}\geq X_{ip}$:
\begin{align*}
        | P_{ip} - X_{ip} |
        &= P_{ip} - X_{ip} \\ 
        &= \frac{\exp(\bm{z}_i \cdot \bm{z}_p / \tau)}{\sum_{a \in A(i)} \exp(\bm{z}_i \cdot \bm{z}_a / \tau)} - \frac{1}{|P(i)|} \\
        &\leq \frac{\exp((1+ \varepsilon^2 / 2) / \tau)}{\exp((1- \varepsilon^2 / 2) / \tau)} \frac{1}{|A(i)|} - \frac{1}{|P(i)|} \\
        &= \frac{\exp((1+ \varepsilon^2 / 2) / \tau)}{\exp((1- \varepsilon^2 / 2) / \tau)} \frac{1}{|P(i)|+|N(i)| } - \frac{1}{|P(i)|} \\
        &\leq \frac{\exp((1+ \varepsilon^2 / 2) / \tau)}{\exp((1- \varepsilon^2 / 2) / \tau)} \frac{1}{|P(i)|} - \frac{1}{|P(i)|} \\
        &= \exp(\varepsilon^2 / \tau) \frac{1}{|P(i)|} - \frac{1}{|P(i)|} \\
        &= (\exp(\varepsilon^2 / \tau) -1)\frac{1}{|P(i)|} 
\end{align*}
if $X_{ip}\geq P_{ip}$:
\begin{align*}
        | P_{ip} - X_{ip} |
        &= X_{ip} - P_{ip} \\
        &\leq \frac{1}{|P(i)|}- \frac{\exp((1- \varepsilon^2 / 2) / \tau)}{\exp((1+ \varepsilon^2 / 2) / \tau)} \frac{1}{|A(i)|} \\
        &= \frac{1}{|P(i)|}+\exp(-\varepsilon^2 / \tau) \frac{-1}{|A(i)|} \\
        &= \frac{1}{|P(i)|}+\exp(-\varepsilon^2 / \tau) \left[ \frac{-|P(i)|}{|P(i)||A(i)|} + \frac{|A(i)|}{|P(i)||A(i)|} - \frac{|A(i)|}{|P(i)||A(i)|} \right ]\\
        &= \frac{1}{|P(i)|}+\exp(-\varepsilon^2 / \tau) \left[\frac{|A(i)|-|P(i)|}{|P(i)||A(i)|} - \frac{|A(i)|}{|P(i)||A(i)|} \right] \\
        &= \frac{1}{|P(i)|}+\exp(-\varepsilon^2 / \tau) \left[ \frac{|A(i)|-|P(i)|}{|P(i)||A(i)|} - \frac{1}{|P(i)|} \right]\\
        &=\exp(-\varepsilon^2/ \tau) \frac{|N(i)|}{|A(i)||P(i)|} + (1-\exp(-\varepsilon^2 / \tau)) \frac{1}{|P(i)|} \\
        &\leq \exp(-\varepsilon^2/ \tau) (\frac{|N(i)|}{|A(i)||P(i)|} + (\exp(\varepsilon^2 / \tau) -1)\frac{1}{|P(i)|}) 
    \end{align*}

    \[ \Longrightarrow
| P_{ip} - X_{ip} | \leq \exp(-\varepsilon^2/ \tau)(\frac{|N(i)|}{|P(i)||A(i)|} + (\exp(\varepsilon^2/ \tau)-1) \frac{1}{|P(i)|})
\]

\begin{align*}
    \| \bm{z}_p - (\bm{z}_i \cdot \bm{z}_p) \bm{z}_i \| 
    &= \| \bm{z}_p - \bm{z}_i + \bm{z}_i - (\bm{z}_i \cdot \bm{z}_p) \bm{z}_i \| \\
    & \leq \| \bm{z}_p - \bm{z}_i \| + | 1- (\bm{z}_i \cdot \bm{z}_p) | \|\bm{z}_i \| \\
    & \leq \varepsilon + \left| 1 - \mathbf{z}_i \cdot \mathbf{z}_p \right| \\
    &= \varepsilon + \left| 1 + \frac{1}{2}(\mathbf{z}_i - \mathbf{z}_p) \cdot (\mathbf{z}_i - \mathbf{z}_p) - \frac{1}{2} \mathbf{z}_i \cdot \mathbf{z}_i - \frac{1}{2} \mathbf{z}_p \cdot \mathbf{z}_p \right| \\
    &\leq \varepsilon +  \left| \frac{1}{2} (\mathbf{z}_i - \mathbf{z}_p) \cdot (\mathbf{z}_i - \mathbf{z}_p) \right| \\
    & \leq \varepsilon +  \frac{1}{2}  \varepsilon^2
\end{align*}

\begin{align*}
\left\| \frac{\partial \mathcal{L}^{sup}_i}{\partial \bm{w}_i}\Bigg|_{N(i)} \right\| &\leq \frac{1}{\tau \|\bm{w}_i\|} \sum_{n \in N(i)} \| \bm{z}_n - (\bm{z}_i \cdot \bm{z}_n)\bm{z}_i\| | P_{in} | \\
&\leq \frac{1}{\tau \|\bm{w}_i\|} \frac{|N(i)|}{|A(i)|}(\varepsilon+\frac{1}{2}\varepsilon^2)\exp(\varepsilon^2 / \tau)
\end{align*}

Finally,
\begin{align*}
\left\| \frac{\partial \mathcal{L}^{sup}_i}{\partial \bm{w}_i} \right\| &\leq \left\| \frac{\partial \mathcal{L}^{sup}_i}{\partial \bm{w}_i}\Bigg|_{P(i)} \right\|+ \left\| \frac{\partial \mathcal{L}^{sup}}{\partial \bm{w}_i}\Bigg|_{N(i)}\right\| \\
&\leq \frac{1}{\tau \|\bm{w}_i\|} \left( \sum_{p \in P(i)} \| \bm{z}_p - (\bm{z}_i \cdot \bm{z}_p) \bm{z}_i \| | P_{ip} - X_{ip} | + \sum_{n \in N(i)} \| \bm{z}_n - (\bm{z}_i \cdot \bm{z}_n) \bm{z}_i \| | P_{in} | \right) \\
&\leq \frac{1}{\tau \|\bm{w}_i\|} \left( \sum_{p \in P(i)} (\varepsilon + \frac{1}{2} \varepsilon^2) \left( \exp(-\varepsilon^2/ \tau) (\frac{|N(i)|}{|A(i)||P(i)|} + (\exp(\varepsilon^2 / \tau) -1)\frac{1}{|P(i)|}) \right) \right. \\
&\phantom{=} \left. + (\varepsilon + \frac{1}{2} \varepsilon^2) \frac{|N(i)|}{|A(i)|} \exp(\varepsilon^2/ \tau) \right) \\
&\leq \frac{1}{\tau \|\bm{w}_i\|} \left( |P(i)| (\varepsilon + \frac{1}{2} \varepsilon^2) \left( \exp(-\varepsilon^2/ \tau) (\frac{|N(i)|}{|P(i)||A(i)|} + (\exp(\varepsilon^2/ \tau) -1) \frac{1}{|P(i)|} )\right) \right. \\
&\phantom{=} \left. +  (\varepsilon + \frac{1}{2} \varepsilon^2) \frac{|N(i)|}{|A(i)|} \exp(\varepsilon^2/ \tau) \right) \\
&= \frac{1}{\tau \|\bm{w}_i\|} (\varepsilon + \frac{1}{2} \varepsilon^2) \left( \exp(-\varepsilon^2/ \tau) (\frac{|N(i)| }{|A(i)|} + (\exp(\varepsilon^2/ \tau) - 1) \frac{|P(i)|}{|P(i)|}) \right) \\
&\phantom{=} + \frac{1}{\tau \|\bm{w}_i\|} (\varepsilon + \frac{1}{2} \varepsilon^2) \left( \frac{|N(i)|}{|A(i)|} \exp(\varepsilon^2/ \tau) \right) \\
&= \frac{1}{\tau \|\bm{w}_i\|} (\varepsilon + \frac{1}{2} \varepsilon^2) \left( \frac{|N(i)|}{|A(i)|} \exp(-\varepsilon^2/ \tau) + \exp(-\varepsilon^2/ \tau)(\exp(\varepsilon^2/ \tau) - 1) + \frac{|N(i)|}{|A(i)|} \exp(\varepsilon^2/ \tau) \right) \\
&= \frac{1}{\tau \|\bm{w}_i\|} (\varepsilon + \frac{1}{2} \varepsilon^2) \left( (1-\frac{|P(i)|}{|A(i)|}) \exp(-\varepsilon^2/ \tau) + \exp(-\varepsilon^2/ \tau)(\exp(\varepsilon^2/ \tau) - 1) + (1 - \frac{|P(i)|}{|A(i)|}) \exp(\varepsilon^2/ \tau) \right) 
\end{align*}

\end{proof}

Based on the theorem's conclusions, we see that the gradients for the loss function in supervised contrastive learning are upper-bounded by the number of positive samples. In the context of severe class imbalances, the increment in the number of positive samples (from the majority class) may dominate the output vector. This dominance can constrain the gradient magnitudes, causing them to become too small to induce effective weight updates in the network.

\clearpage
\twocolumn

\section{Ablations}
\label{app:ablations}

\subsection{Ablations on temperature and batch size}
\label{app:temp_bs_ablations}
Experiments on temperature reveal that both fixes are robust across a range of temperature settings, with optimal results observed for temperatures between 0.1 and 0.5. 
While low to medium temperatures do not alleviate the collapse in SupCon, very high temperatures can mitigate collapse issues in moderately imbalanced scenarios; however, this comes at a cost, resulting in an accuracy that is 19\% lower than our proposed method. \cref{fig:ablation_tem_batchsize}

Furthermore, unlike supervised contrastive learning in balanced multi-class datasets, we find that increasing batch sizes negatively affects performance. We attribute this degradation to the larger number of positive pairs per sample introduced by bigger batches, leading to collapse. A detailed theoretical justification of this phenomenon is provided in \cref{app:proof}.

\begin{figure}[h]
    \centering
    \includegraphics[width=\linewidth]{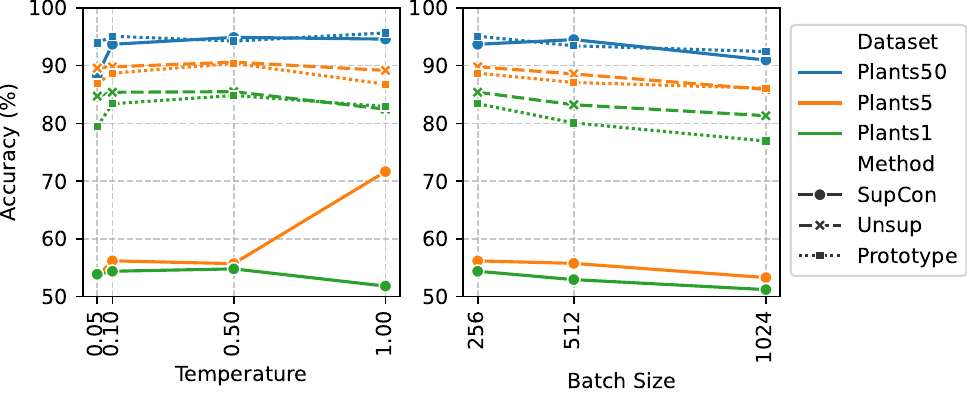}
    \caption{Ablation of our fixes and SupCon for different batch sizes and temperatures using the plants dataset. Our fixes are robust across a range of temperatures and batch sizes, though increasing batch size typically degrades performance slightly.}
    \label{fig:ablation_tem_batchsize}
\end{figure}

\subsection{Supervised minority ablation}
\label{app:sup_min_ablation}

In \cref{tab:ablation}, we investigate how varying levels of supervision in the majority class impact performance on an imbalanced dataset, while keeping the supervision level fixed in the minority class (see \cref{app:def:minsup}). The study was conducted using the insects dataset composed of 5\% minority samples and 95\% majority samples. This shows that our strategy of full supervision in the minority and no supervision in the majority performs best in these strong imbalance scenarios. A notable drop in performance occurs between 5\% and 1\% supervision where the representations collapse. This ablation study is similar to KCL \cite{kang2021exploring} under varying levels of K. The results are consistent with our KCL baselines as we find that a larger K in a batch is harmful for downstream utility. For a batch size of 256, 5\% supervision already translates to $K=12.8$. 
 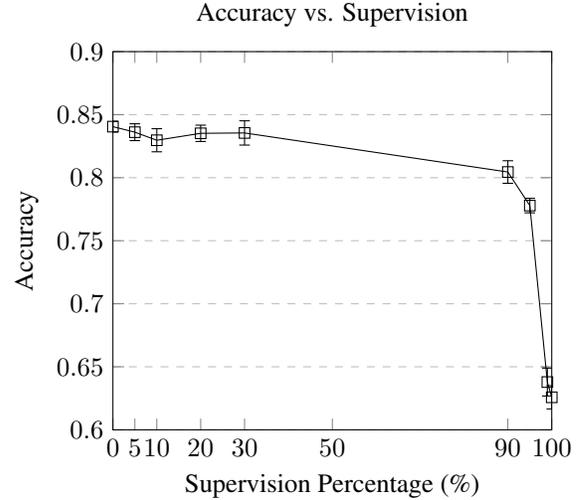
\begin{figure}[H]
\centering

\begin{tikzpicture}
\begin{axis}[
    title={Accuracy vs. Supervision},
    xlabel={Supervision Percentage (\%)},
    ylabel={Accuracy},
    xmin=0, xmax=100,
    ymin=0.6, ymax=0.9,
    xtick={0,5,10,20,30,50,90,100},
    ytick={0.6,0.65,0.7,0.75,0.8,0.85,0.9},
    legend pos=south east,
    ymajorgrids=true,
    grid style=dashed,
    width=0.7\linewidth,
    scale only axis,
]

\addplot[
    color=black,
    mark=square,
    error bars/.cd,
    y dir=both, y explicit,
]
coordinates {
    (0,0.8405) +- (0,0.004304)
    (5,0.8361) +- (0,0.006683)
    (10,0.8297) +- (0,0.009154)
    (20,0.8352) +- (0,0.00648)
    (30,0.8355) +- (0,0.009624)
    (90,0.8045) +- (0,0.008953)
    (95,0.7779) +- (0,0.005765)
    (99,0.6379) +- (0,0.01106)
    (100,0.6258) +- (0,0.0092)
};

\end{axis}
\end{tikzpicture}
\caption{Increasing supervision in the majority class vs. balanced test accuracy.}
\end{figure}
\begin{table}[ht]
\centering
\footnotesize
\begin{tabular}{cc}

\toprule    
Supervision $\theta$ & Accuracy (\%) \\
\midrule
    0\% &  $\textbf{84.05} \pm 0.43$ \\
    5\% &  $83.61 \pm 0.66$ \\
    10\%  & $82.97 \pm 0.91$ \\
    20\%  &  $83.52  \pm 0.64$ \\
    30\%  &  $83.55 \pm 0.96$\\
    90\%  &  $80.45 \pm 0.89$\\
    95\%  &  $77.79 \pm 0.57$ \\
    99\%  &  $63.79 \pm 1.10$ \\
    100\% & $62.58 \pm 0.92$ \\
\bottomrule
\end{tabular}

\caption[Ablation Study]{For our Supervised Minority fix, we show the effects of increasing supervision in the majority class on the insects dataset with 5\% imbalance. No amount of supervision in the majority class improves downstream performance.}
\label{tab:ablation}
\end{table}

\subsection{Supervised majority ablation}
\label{app:sup_maj_ablation}
\begin{table}[h]
\centering
\begin{tabular}{|c|c|c|}
\hline
\multirow{2}{*}{\textbf{Majority Supervision (\%)}} & \multicolumn{2}{c|}{\textbf{Label Imbalance}} \\
\cline{2-3}
 & \textbf{5\%} & \textbf{1\%} \\
\hline
10 & 67.44 & 55.93 \\ 
50 & 54.72 & 53.11 \\ 
100 & 52.16 & 51.58 \\ 
\hline
\end{tabular}
\caption{Test accuracy when supervision is applied to the majority class instead of the minority class, evaluated at different supervision levels on the Plants dataset with 5\% and 1\% imbalance.}
\label{tab:performance_sup_maj_ablation}
\end{table}

We evaluated using SupCon loss for the majority class and NT-Xent loss for the minority class. With no supervision in the minority class, even mild supervision in the majority class failed to train effectively. 

\section{UMAP Visualization} \label{app:umap_supervised}

UMAP~\cite{mcinnes2020umapuniformmanifoldapproximation} is a dimensionality reduction technique that is widely used for visualizing high-dimensional data. We employ UMAP to visualize the embeddings of all three datasets - \textit{plants}, \textit{insects}, and \textit{animals} - under varying levels of class imbalance, using unseen test data (see \cref{fig:umap_plants,fig:umap_bee,fig:umap_animals}).

The UMAP visualizations of the SupCon embedding spaces corroborate our findings from the main paper: the embedding space collapses under strong class imbalances, resulting in diminished utility. Even in the balanced case, the two classes are not distinctly separated. It is important to note that UMAP represents pairwise distances in a relative manner, which can obscure the visualization of an embedding space collapsing to a single vector. The relative scaling in UMAP means that even minimal differences between embeddings can appear more pronounced, masking the extent of the collapse.

In contrast, the \textit{Supervised Prototype Fix} and the \textit{Supervised Minority Fix} methods exhibit clear class clusters and separation across all levels of imbalance. This observation aligns with our theoretical illustrations presented in Figure~\ref{fig:sup_fixes}.


\begin{figure*}[h]
  \centering
  \includegraphics[width=0.75\linewidth]{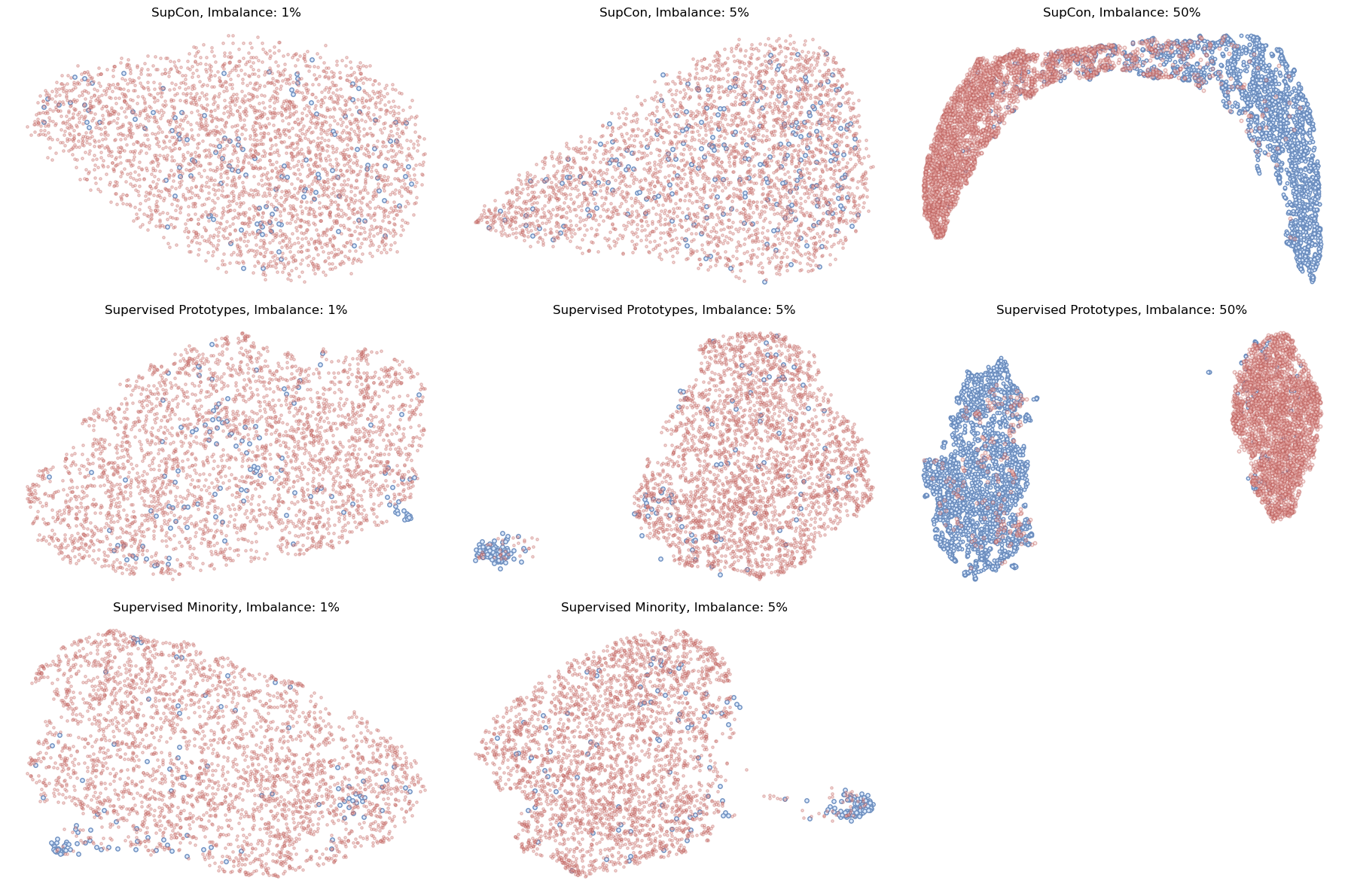}
  \caption{UMAP visualization of projection space after supervised pre-training on the plants dataset.}
  \label{fig:umap_plants}
\end{figure*}

\begin{figure*}[h]
  \centering
  \includegraphics[width=0.75\linewidth]{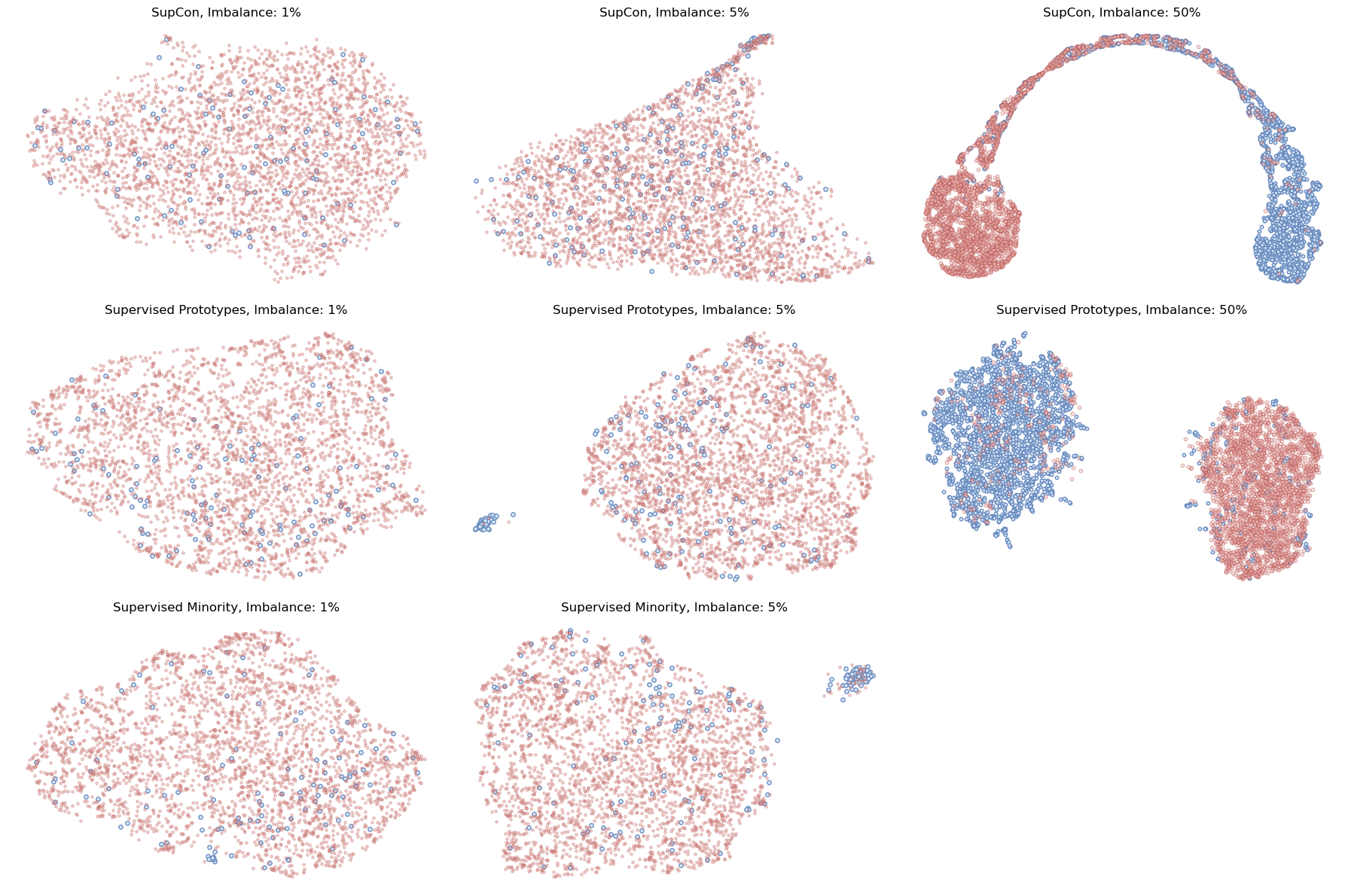}
  \caption{UMAP visualization of projection space after supervised pre-training on the insects dataset.}
  \label{fig:umap_bee}
\end{figure*}

\begin{figure*}[h]
  \centering
  \includegraphics[width=0.75\linewidth]{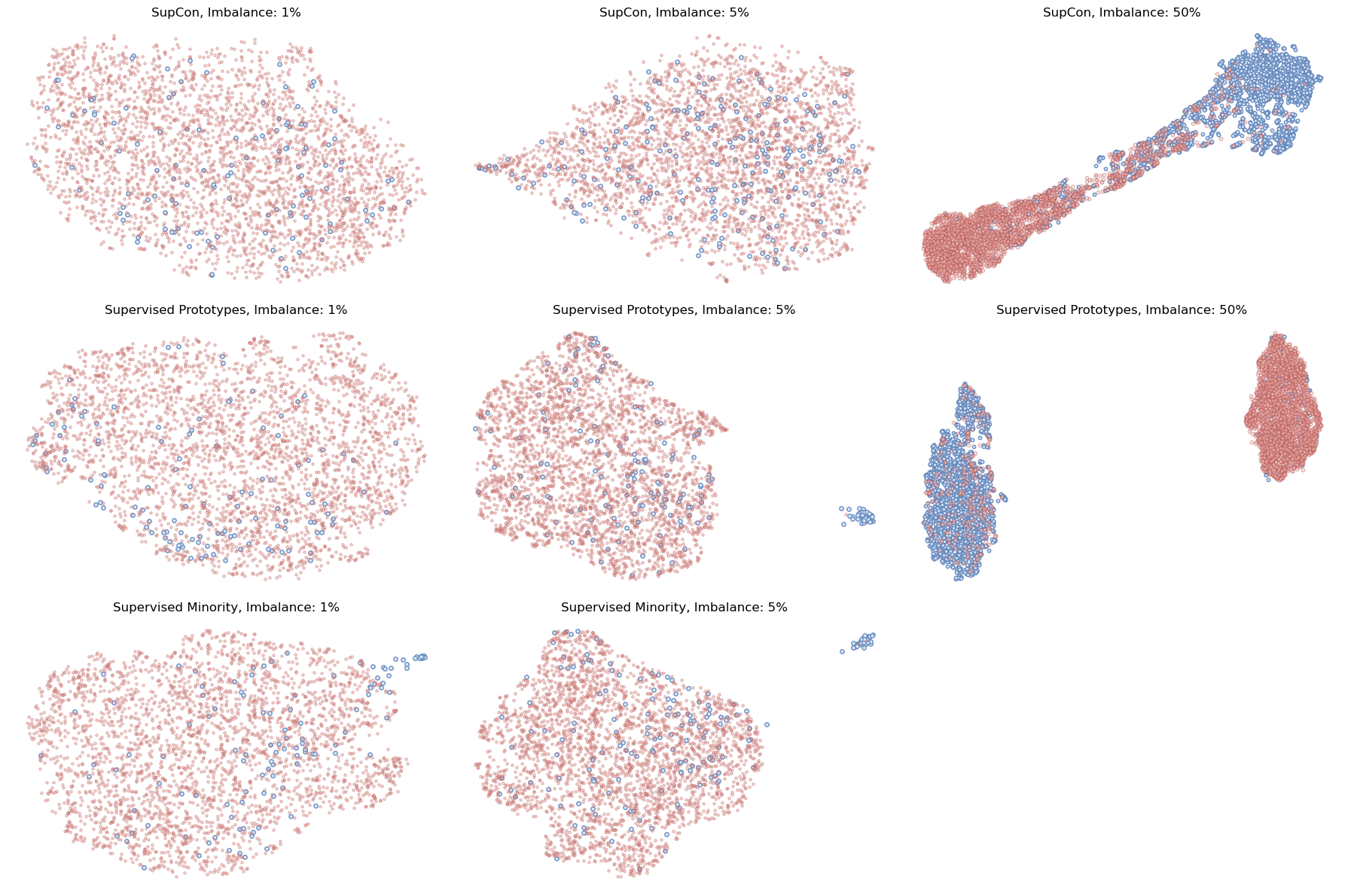}
  \caption{UMAP visualization of projection space after supervised pre-training on the animals dataset.}
  \label{fig:umap_animals}
\end{figure*}

\end{document}